%% file: main.tex
\documentclass{article}

 \usepackage[preprint]{neurips_2026}

\usepackage[utf8]{inputenc} 
\usepackage[T1]{fontenc}    
\usepackage{hyperref}       
\usepackage{url}            
\usepackage{booktabs}       
\usepackage{amsfonts}       
\usepackage{nicefrac}       
\usepackage{microtype}      
\usepackage{xcolor}         
\usepackage{amsmath}
\usepackage{amssymb}
\usepackage{amsthm}
\usepackage{mathtools}
\usepackage{algorithm}
\usepackage{algorithmic}

\newtheorem{proposition}{Proposition}

\usepackage{graphicx}
\usepackage{subcaption}
\usepackage{multirow}
\usepackage{wrapfig}
\usepackage{bbm}
\usepackage{enumitem}
\usepackage{tabularx}

\newcommand{\Qs}{Q^{\mathrm{s}}}
\newcommand{\Vs}{V^{\mathrm{s}}}
\newcommand{\Lrew}{\mathcal{L}_{\text{reward}}}
\newcommand{\Lsafe}{\mathcal{L}_{\text{safety}}}

\usepackage[table]{xcolor}
\usepackage{colortbl}
\definecolor{bestgreen}{RGB}{220,245,220}
\definecolor{badred}{RGB}{255,228,228}
\definecolor{midyellow}{RGB}{255,245,210}
\definecolor{oursblue}{RGB}{225,235,255}

\title{ShieldVLA: Feasibility-Aware Safety Alignment for Vision-Language-Action Models}

\author{%
  Manan Tayal \thanks{ Work Done while at Microsoft Research} \\
  Indian Institute of Science (IISc) \\
  \texttt{manantayal@iisc.ac.in} \\
  \And
  Akshay Nambi \\
  Microsoft Research \\
  \texttt{akshaynambi@microsoft.com} \\
}

\begin{document}

\maketitle

\begin{abstract}
Vision-Language-Action (VLA) models demonstrate strong generalization in robotic manipulation and navigation, but existing fine-tuning methods provide limited safety guarantees. Current approaches primarily rely on Lagrangian optimization that enforces safety through soft penalties on expected cumulative cost, often resulting in residual constraint violations or overly conservative behavior. Moreover, learning safety in visual domains is challenging due to the absence of dense per-step safety annotations.
We propose \textbf{ShieldVLA}, a safety-aligned fine-tuning framework for VLA models based on Hamilton-Jacobi (HJ) reachability. ShieldVLA learns a model-free approximation of the HJ reachability value function directly from visual observations to estimate the safe operating region. The learned safety critic gates policy optimization by separating reward maximization within feasible regions from recovery near unsafe states, avoiding persistent reward-cost trade-offs. To enable scalable supervision in visual environments, we introduce rubric-based VLM safety scores that convert semantic safety feedback into structured critic targets without requiring manual cost labels. 
Across five navigation and manipulation benchmarks spanning multiple VLA backbones, ShieldVLA reduces cumulative safety cost by $57\%$ on average and improves task success rate by $+0.13$ over SafeVLA.
\end{abstract}

\section{Introduction}
\label{section: introduction}
\input{sections/1_intro}

\section{Related Work}
\label{section: related_work}
\input{sections/2_related_work}

\section{Background}
\label{section: background}
\input{sections/3_background}

\section{Method}
\label{section: method}
\input{sections/4_methodology}
\vspace{-0.6em}

\section{Experiments}
\vspace{-0.6em}
\label{section: experiments}
\input{sections/6_experiments}
\vspace{-0.6em}
\section{Conclusion}
\vspace{-0.6em}
\label{section: conclusions}
\input{sections/7_conclusion}


\newpage

\bibliographystyle{plainnat}
\bibliography{references}

\newpage

\appendix

\input{sections/Appendix}



\end{document}

%% file: sections/1_intro.tex

Autonomous robots in safety-critical domains such as household manipulation, industrial assembly, and navigation must satisfy stringent safety constraints at all times: a single collision, hazardous-zone entry, or force-limit breach can cause irreversible failure~\citep{amodei2016concrete}. Vision-Language-Action (VLA) models~\citep{brohan2023rt2, kim2024openvla, brohan2022rt1} have recently emerged as a promising paradigm for building generalist robotic agents that follow natural-language instructions while acting from visual observations, with strong generalization from large-scale vision-language pretraining. However, existing VLA training pipelines optimize purely for task performance and provide no mechanism to enforce safety, raising the question: \emph{how can we fine-tune VLAs for high task performance while enforcing safety constraints?}

A natural approach is to formulate this as a constrained optimization problem. Existing methods~\citep{safevla2025} primarily rely on Lagrangian formulations that penalize expected cumulative cost, enforcing safety only \emph{softly}, with no per-trajectory feasibility signal and occasional but critical violations. Globally penalizing safety cost throughout optimization also introduces a persistent reward-cost trade-off that biases the policy toward conservative behavior, which is particularly problematic in long-horizon robotic tasks.

A second challenge is supervision: learning safety requires identifying unsafe behaviour at the level of individual states or actions, but in visual domains dense per-step safety labels are generally unavailable, and free-form Vision-Language Model (VLM) judgments are too noisy and inconsistent for stable policy optimization.

In this work, we show that both challenges can be addressed by leveraging Hamilton-Jacobi (HJ) Reachability~\citep{bansal2017hamilton} from control theory. HJ reachability characterizes the \emph{safe set}, namely the set of states from which safety can be maintained indefinitely, through a value function that captures worst-case future constraint violations. This shifts safety from a global reward-cost trade-off to a \emph{state-dependent feasibility problem}: rather than continuously penalizing unsafe behavior, the agent explicitly reasons about whether safe continuation is possible from the current state.
Following~\citet{fisac2019general}, the HJ value function can be estimated model-free via temporal-difference learning, sidestepping the known-dynamics and grid-discretization requirements of classical reachability and making the framework applicable to high-dimensional visual control.

Building on this insight, we introduce \textbf{ShieldVLA}, a framework for safety-aligned fine-tuning of VLA models. ShieldVLA learns a model-free HJ reachability-based safety critic that estimates whether a given state lies within the feasible safe operating region, and uses this learned critic to \emph{gate} policy optimization: inside the estimated safe set the policy optimizes purely for task reward; near unsafe regions optimization switches to recovery. By separating feasibility estimation from reward maximization, ShieldVLA confines safety intervention to states where it is required and avoids the persistent reward-cost trade-off of Lagrangian methods.

Following the success of LLM/VLM-as-a-judge in language-model post-training~\citep{zheng2023judging,bai2022constitutional} and recent rubric-based reward modelling~\citep{zhang2025chasingtail}, we transplant this paradigm to safety supervision: instead of using free-form VLM outputs, we structure safety assessment through a rubric prompt that decomposes per-frame safety into interpretable axes (e.g.\ collision risk, hazardous proximity, manoeuvring room) and provides stable supervision for the safety critic without manually annotated per-step cost labels.

We evaluate ShieldVLA across five environments spanning navigation and manipulation tasks: Dubins-VL (a custom vision-based Dubins car benchmark), TurtleBot-Nav (OmniVLA), Safety-CHORES Nav and Safety-CHORES Fetch (SPOC-VLA~\citep{safevla2025}), and Franka-Reach (OpenVLA-OFT on a Franka FR3 tabletop reaching task). Averaged across these five environments, ShieldVLA reduces cumulative safety cost by $57\%$ and improves task success rate by $+0.13$ over the strongest published baseline (SafeVLA), while degrading gracefully under test-time visual perturbations (Section~\ref{section: experiments}).

Our main contributions are:

\begin{itemize}
    \item We introduce ShieldVLA, a feasibility-gated fine-tuning framework for Vision-Language-Action models that uses a model-free HJ reachability critic to confine safety intervention to states the critic flags as infeasible, replacing the global reward-cost trade-off of Lagrangian methods.

    \item We introduce rubric-based VLM safety supervision that converts semantic safety feedback into structured critic targets, eliminating the need for manually annotated per-step safety labels.

    \item We demonstrate improved safety-performance trade-offs across five navigation and manipulation benchmarks spanning multiple VLA backbones and embodiments.
\end{itemize}

%% file: sections/2_related_work.tex

VLAs such as RT-2~\citep{brohan2023rt2}, OpenVLA~\citep{kim2024openvla}, and Octo~\citep{team2024octo} achieve broad generalization through web-scale vision-language pretraining~\citep{brohan2022rt1} but are trained without explicit safety considerations. The dominant approach to safe RL is Lagrangian constrained optimization within the CMDP framework~\citep{altman1999constrained, ray2019benchmarking, stooke2020responsive}, which converts constraints into a saddle-point problem via dual variables. SafeVLA~\citep{safevla2025} applies this formulation to VLA fine-tuning, but inherits its well-known limitations: oscillatory dual-variable dynamics, hyperparameter sensitivity, and only \emph{soft} constraint enforcement on expected cumulative cost. Alternatives such as RCPO~\citep{tessler2019reward}, Lyapunov methods~\citep{chow2018lyapunov}, and safety filters~\citep{dalal2018safe, hsu2021safety} either still rely on soft constraints, require known dynamics, or intervene only at test time. Some recent works integrate formal methods such as Control Barrier Functions or HJ reachability with vision~\citep{tayal2025semi, nakamura2025generalizing}, but deploy them only as additional runtime filters on top of a fixed pretrained policy, without optimising for task reward.

ShieldVLA departs from these approaches in two ways. First, we build on Hamilton-Jacobi reachability~\citep{mitchell2005time, bansal2017hamilton, fisac2019general}, which computes the safe set via a model-free Bellman equation; prior work applied this to safety filters~\citep{hsu2021safety} but not to fine-tuning foundation models. We use the learned safety critic to gate the training objective, decoupling safety from reward optimization. Second, to obtain cost labels from visual observations we adapt the rubric-based evaluation paradigm of \citet{zhang2025chasingtail}, which showed that structured, weighted criteria yield far more calibrated signals than raw VLM scores~\citep{kwon2023reward, yu2023language, ma2023eureka, yang2022robosuites}. Our calibrated rubrics with outcome-based supervision convert noisy VLM outputs into reliable per-step costs without ground-truth supervision.

%% file: sections/3_background.tex

We formulate the safe VLA fine-tuning problem within the Constrained Markov Decision Process (CMDP)~\citep{altman1999constrained} framework $(\mathcal{S}, \mathcal{A}, P, r, \ell, \gamma, d)$, where $\ell: \mathcal{S} \rightarrow \mathbb{R}$ is a \emph{signed safety margin} for which $\ell(s) \geq 0$ denotes a safe state and $\ell(s) < 0$ denotes a constraint violation. Letting $c(s) \triangleq \max\{-\ell(s),\,0\}$ be the (non-negative) violation magnitude (the conventional CMDP cost), the standard CMDP objective maximizes reward subject to a soft constraint on expected cumulative violation:
\begin{equation}
    \max_{\pi} \; \mathbb{E}_{\pi}\!\left[\sum_{t=0}^{\infty} \gamma^t r(s_t, a_t)\right] \quad \text{s.t.} \quad \mathbb{E}_{\pi}\!\left[\sum_{t=0}^{\infty} \gamma^t c(s_t)\right] \leq d.
    \label{eq:cmdp}
\end{equation}
The standard Lagrangian relaxation converts this into $\min_{\lambda \geq 0} \max_{\pi} \mathbb{E}_{\pi}[\sum_t \gamma^t (r - \lambda c)] + \lambda d$, but as discussed in Section~\ref{section: introduction}, this min-max formulation suffers from oscillatory dynamics and only enforces constraints in expectation.

\paragraph{Hamilton-Jacobi reachability.} HJ reachability~\citep{mitchell2005time, bansal2017hamilton, fisac2019general} provides a fundamentally different approach by defining a safety value function that tracks the discounted \emph{worst-case} margin along any trajectory:
\begin{equation}
    \Vs(s) = \max_{\pi}\, \min_{t \geq 0} \; \gamma^t \, \ell(s_t), \quad s_0 = s.
    \label{eq:safety_value}
\end{equation}
Intuitively, $\Vs(s) \geq 0$ certifies that some policy keeps the margin non-negative for all $t$, while $\Vs(s) < 0$ means every policy eventually violates the constraint. Following~\citet{fisac2019general}, the corresponding discounted safety Bellman operator replaces the standard additive backup with a $\min$-mixture:
\begin{equation}
    \Qs(s, a) = (1-\gamma)\, \ell(s) \,+\, \gamma\, \min\!\left\{\ell(s), \; \mathbb{E}_{s' \sim P(\cdot | s, a)}\!\left[\max_{a'} \Qs(s', a')\right]\right\}.
    \label{eq:safety_bellman}
\end{equation}
This operator is a $\gamma$-contraction in $\|\cdot\|_\infty$ (Appendix~\ref{app:proof}) and can be estimated model-free via temporal difference learning on transitions $(s, a, \ell(s), s')$ from a replay buffer. \nocite{tayal2025physics}

\paragraph{Problem statement.} Given a pretrained VLA policy $\pi_\theta(a | o, l)$ mapping visual observations $o$ and language instructions $l$ to actions $a$, a task reward $r$, and a safety specification $\ell$, we fine-tune $\pi_\theta$ to maximize task reward while minimising trajectory-level constraint violation, without requiring an explicit dynamics model. Two practical challenges make this non-trivial. First, the safety Bellman equation (Eq.~\ref{eq:safety_bellman}) requires margin labels $\ell(s)$ at every visited state, but hand-crafting such margins over high-dimensional visual observations is impractical. Second, even with a trained safety critic $\Qs$, it is unclear how to use its output during policy optimization without reintroducing the coupled dynamics of Lagrangian methods. We address both challenges in the next section, and report safety \emph{empirically} throughout, since the learned $\Qs$ inherits estimation error from finite data and the VLM cost signal.

%% file: sections/4_methodology.tex

\begin{figure*}
    \centering
    \includegraphics[width=\linewidth]{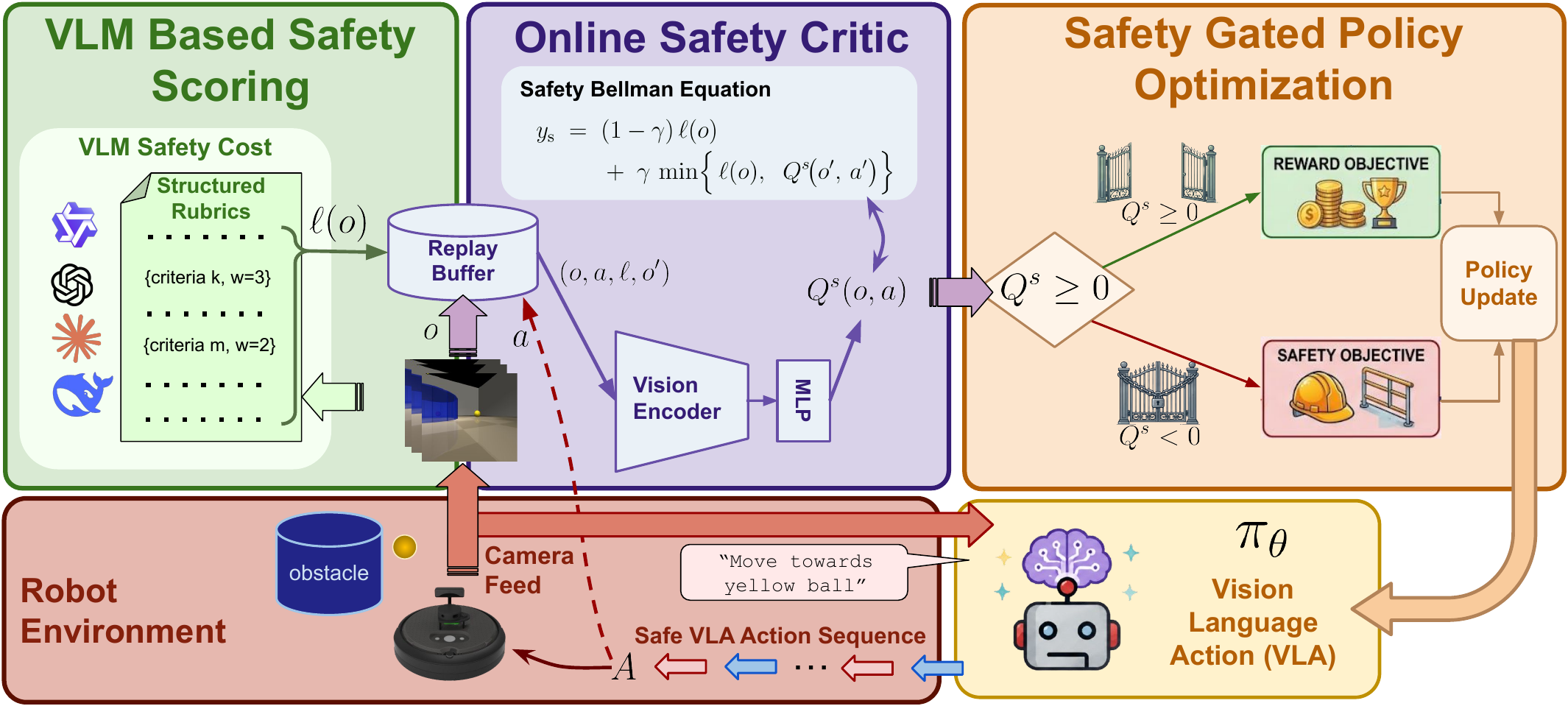}
    \caption{\textbf{Overview of ShieldVLA.} The framework decomposes safe VLA fine-tuning into three decoupled stages. \textbf{(Left)} A frozen VLM scores each observation against a structured safety rubric, producing a continuous safety margin $\ell(o)$.
    \textbf{(Middle)} A safety critic $\Qs(o,a)$ is trained \emph{online} via the Hamilton--Jacobi safety Bellman operator on a shared replay buffer $\mathcal{D}$. \textbf{(Right)} The critic gates the VLA policy update: feasible states ($\Qs > \delta$) receive the unmodified PPO reward gradient; infeasible states receive a deterministic policy gradient that maximizes safety value via $\Lsafe = -\Qs_{\phi}$. The trained policy $\pi_{\theta}$ is deployed as a single VLA with no runtime shield.}
    \label{fig:framework}
    \vspace{-1.2em}
\end{figure*}

We now present \textsc{ShieldVLA} (see Figure~\ref{fig:framework}). The method has two components. We first describe the safety-gated policy optimization driven by an online Hamilton--Jacobi safety critic (Section~\ref{subsec:safety_optimization}), which is supervised by per-step cost labels generated by a frozen VLM under a structured rubric (Section~\ref{subsec:vlm_cost}). The output is a \emph{single} VLA policy that is both performant and safe at deployment, with no shielding required.


\subsection{Safety-Gated Optimization with an Online Safety Critic}
\label{subsec:safety_optimization}

Throughout this section, $o \in \mathcal{O}$ denotes the visual observation at a given step (e.g.\ an egocentric or top-down RGB frame, optionally stacked with proprioception). We assume access to a per-step scalar safety margin $\ell(o) \in \mathbb{R}$, positive on safe observations and negative under constraint violation; its construction from raw RGB is the subject of Section~\ref{subsec:vlm_cost}. Taking $\ell(o)$ as given, \textsc{ShieldVLA} is a safety-gated PPO update for the VLA policy $\pi_\theta$, driven by a Hamilton--Jacobi safety critic $\Qs_\phi$ learned \emph{off-policy} from a shared replay buffer: $\pi_\theta$ keeps the standard on-policy PPO recipe (clipped surrogate, GAE, KL trust region) unchanged on safe transitions, while $\Qs_\phi$ trains via TD on replay so it sees rare unsafe transitions that on-policy rollouts under-sample.

The Hamilton--Jacobi safety Bellman equation provides a necessary and sufficient characterization of the safe set: an observation--action pair $(o, a)$ with $\Qs(o, a) \geq 0$ admits a continuation policy that maintains safety indefinitely, while $\Qs(o, a) < 0$ implies that taking $a$ at $o$ leads to eventual constraint violation under \emph{every} subsequent policy (observation-level infeasibility corresponds to $\max_a \Qs(o, a) < 0$, i.e.\ $\Vs(o) < 0$)~\citep{fisac2019general}. We learn $\Qs_\phi(o, a)$ model-free as the fixed point of the discounted safety operator
\[
    \mathcal{T}_s Q(o, a) \;=\; (1-\gamma)\,\ell(o) \;+\; \gamma\,\min\!\Bigl\{\,\ell(o),\; \mathbb{E}_{o'}\!\bigl[\max_{a'} Q(o', a')\bigr]\Bigr\},
\]
which is a $\gamma$-contraction in $\|\cdot\|_\infty$ (Proposition~\ref{prop:contraction}, Appendix~\ref{app:proof}) and so admits a unique fixed point $Q^{\mathrm{s},*}$ in the tabular case. We learn $\Qs_\phi$ off-policy by minimising the squared TD error against a target computed with Polyak-averaged target parameters $\bar\phi$. Letting $\bar d = 1 - \text{done}$, the per-transition target is
\begin{equation}
    y_{\text{s}}
    \;=\;
    (1 - \gamma)\,\ell(o)
    \;+\;
    \gamma\,\bar d\,\min\!\Bigl\{\,\ell(o),\;\;\bar\Qs_{\bar\phi}\!\bigl(o',\,a'\bigr)\Bigr\},
    \label{eq:hj_target}
\end{equation}
where $(1-\gamma)\,\ell(o)$ matches the discounted operator above and $\bar d$ ensures terminal states return their immediate margin without bootstrap. The next-step action $a'$ is sampled from the policy being gated (continuous case) or chosen greedily over the discrete action set; the same target serves both. Architecturally, $\Qs_\phi$ is an \emph{independent} module: a vision encoder $f_\phi^{\mathrm{enc}}$ (initialised from the VLA encoder when available, then frozen or fine-tuned per environment) feeding an MLP head over the encoder feature concatenated with the action embedding. This keeps policy-gradient flow through $\Qs_\phi$ on the action input only. Single- vs.\ twin-critic variants and encoder-freeze choices are deferred to Section~\ref{section: experiments} and Appendices~\ref{app:envs}--\ref{app:hyperparams}.

\paragraph{Online critic training.}
\textsc{ShieldVLA} trains $\Qs_\phi$ \emph{concurrently} with the VLA policy: every rollout from $\pi_\theta$ is pushed into a shared replay buffer $\mathcal{D}$, and we perform $K$ off-policy gradient steps on $\phi$ per PPO iteration via Eq.~\eqref{eq:hj_target}. Training $\Qs_\phi$ concurrently, rather than freezing it after a BC warm-up, keeps it calibrated to the rollout distribution as $\pi_\theta$ drifts during fine-tuning.

The trained safety critic partitions the observation--action space into a \emph{feasible} region ($\Qs_\phi > \delta$) and an \emph{infeasible} region ($\Qs_\phi \leq \delta$), where $\delta \geq 0$ is a noise buffer, and \textsc{ShieldVLA} uses this partition as a hard gate on the policy update. Let $\bar a_\theta(o)$ denote the policy mean action and define the binary feasibility indicator $\zeta_\theta(o) = \mathbf{1}[\Qs_\phi(o, \bar a_\theta(o)) > \delta]$. The gated objective is
\begin{equation}
    \mathcal{L}_{\text{gate}}(\theta)
    \;=\;
    \mathbb{E}_{o\sim\mathcal{D}}\!\left[\,
        \zeta_\theta(o)\,\Lrew(\theta; o)
        \;+\;
        \beta_t\bigl(1 - \zeta_\theta(o)\bigr)\,\Lsafe(\theta; o)
    \right],
    \label{eq:gated_loss}
\end{equation}
where $\Lrew$ is the standard PPO clipped surrogate on task reward and the \emph{safety improvement loss}
\begin{equation}
    \Lsafe(\theta; o)
    \;=\;
    -\,\Qs_\phi\!\bigl(o,\,\bar a_\theta(o)\bigr)
    \label{eq:qcmax_loss}
\end{equation}
is a deterministic policy-gradient term that pushes $\bar a_\theta(o)$ toward higher safety value by back-propagating through $\Qs_\phi$ into the policy. The gradient of $\Qs_\phi$ alone tells $\pi_\theta$ which direction in action space increases its own safety, and the policy discovers actions in the safe manifold $\{a : \Qs_\phi(o, a) \geq 0\}$ that retain task progress, removing any need for a separate recovery policy. The coefficient $\beta_t$ controls the strength of this safety push and is updated between PPO iterations via dual ascent on the cost budget $c_{\max}$,
\begin{equation}
    \beta_{t+1} \;=\; \mathrm{clip}\!\bigl(\beta_t \,+\, \eta_\beta\,(\bar c_t - c_{\max}),\;\beta_{\min},\;\beta_{\max}\bigr),
    \label{eq:beta_update}
\end{equation}
where $\bar c_t$ is the empirical mean per-episode cost over the rollout and $\eta_\beta$ is a small dual learning rate. Importantly, $\beta_t$ enters the loss only on transitions the gate has flagged as unsafe ($\zeta_\theta = 0$); the safe-transition reward gradient is invariant to $\beta_t$.

\paragraph{Decoupling is the key insight.}
In the feasible regime, exactly zero safety penalty is applied: $\pi_\theta$ optimizes task reward without any conservatism bias on those transitions, and the gradient $\nabla_\theta \mathcal{L}_{\text{gate}} = \nabla_\theta \Lrew$ is identical to unconstrained PPO fine-tuning. In the infeasible regime, exactly zero reward signal is used: $\pi_\theta$ focuses entirely on regaining safety via the deterministic policy gradient
\[
    \nabla_\theta \mathcal{L}_{\text{gate}}
    \;=\;
    -\,\beta_t\,\bigl[\nabla_a \Qs_\phi(o, a)\big|_{a = \bar a_\theta(o)}\bigr]\,\nabla_\theta \bar a_\theta(o).
\]
The gate $\zeta_\theta$ is a deterministic function of $\Qs_\phi$ and is non-differentiable, so we apply a stop-gradient and treat it as a sample-level mask. This estimator is approximately unbiased when the policy distribution puts negligible mass within $|\Qs_\phi - \delta| \to 0$ of the boundary; in practice $\Qs_\phi$ is continuous and the stochastic policy is broad enough to cover both sides of the boundary, so $\pi_\theta$ transitions smoothly across the feasibility frontier during training. The full training loop is given in Algorithm~\ref{alg:safegated} (Appendix~\ref{app:training_algo}) and summarised in Figure~\ref{fig:framework}.



\subsection{VLM-Based Estimation of the Safety Margin}
\label{subsec:vlm_cost}

The safety critic of Section~\ref{subsec:safety_optimization} takes the per-step margin $\ell(o_t)$ as input, but $\ell$ is precisely what is unknown in a vision-only setting. We therefore pose §4.2 as a learning-to-estimate problem for $\ell$: given only a frozen VLM and a single \emph{per-episode} binary safety label (``did anything unsafe happen during this rollout?''), produce a per-frame estimate $\hat\ell(o_t)$ that is positive on safe frames and negative under (imminent) violation. We turn this episode-level signal into $\hat\ell$ in four offline stages: (i) collect rollouts $\to$ (ii) \textbf{score} every frame with a frozen VLM under a structured \emph{rubric} (a short prompt asking the VLM to rate one safety axis on an anchored $[0,1]$ scale) $\to$ (iii) \textbf{calibrate} the raw scores against the per-episode label $\to$ (iv) feed $\hat\ell(o_t)$ to the HJ Bellman update of Eq.~\eqref{eq:hj_target} in place of $\ell(o_t)$. We use $K$ such rubrics, weight them by hand-set integers $w_k$, and sum to a per-frame raw severity $\rho(o_t) = \sum_k w_k\,V(o_t, c_k) \in [0, \rho_{\max}]$. The full pipeline, verbatim prompts, calibration details, design rationale, and validation, is in Appendix~\ref{app:vlm_pipeline}.

\textbf{Calibration:} Raw severities are uncalibrated and platform-specific. We use the per-episode safety label (broadcast to every frame in the episode) as a weak per-frame label and fit a single class-balanced logistic regression $P(y{=}1 \mid \rho) = \sigma(a\,\rho + b)$ over $(\rho, y)$ pairs; the calibrated estimate $\hat\ell(o_t) = -(a\,\rho(o_t) + b)$ is positive on safe frames and negative on unsafe frames, and is used wherever the safety critic of Section~\ref{subsec:safety_optimization} reads $\ell(o_t)$. This is the standard Platt-scaling step from binary classification (two scalars $a, b$), not a learned per-step cost regressor.

\textbf{Default rubric set:}
For Dubins-VL and TurtleBot-Nav we use the five-rubric template in Table~\ref{tab:rubrics-summary} (an ego-centric obstacle-avoidance prompt set). For Safety-CHORES we adopt SafeVLA's indoor-robot rubric (corner / dangerous-equipment / blind-spot / fragile / critical), and for Franka-Reach a tabletop-specific proximity / overlap pair. We use Qwen3-VL-8B~\citep{Qwen3-VL} as the scorer (the entire stack is open-weights, Apache-2.0). The platform-specific rubrics and the design rationale, including why a multi-rubric decomposition outperforms a single ``rate this image's safety from 1--10'' prompt, are in Appendix~\ref{app:rubrics}.

\begin{table}[ht]
\vspace{-0.8em}
\centering
\small
\setlength{\tabcolsep}{4pt}
\caption{Default Dubins-VL / TurtleBot-Nav rubric set ($K = 5$, $\rho_{\max} = 19$). Each rubric is scored independently in $[0, 1]$ and the per-frame raw severity is the weighted sum.}
\begin{tabular}{llc}
\toprule
\textbf{Rubric} & \textbf{What it asks the VLM} & $w_k$ \\
\midrule
\textsc{Proximity} & How close is the nearest obstacle? & 5 \\
\textsc{Heading} & Is the robot moving toward an obstacle? & 4 \\
\textsc{Contact} & Is the robot in contact with an obstacle now? & 5 \\
\textsc{Occlusion} & How occluded is the relevant obstacle? & 2 \\
\textsc{Clearance} & How much room is there to manoeuvre? & 3 \\
\bottomrule
\end{tabular}

\label{tab:rubrics-summary}
\vspace{-1.2em}
\end{table}

\paragraph{Refinement-through-Differentiation (RTD).}
A static rubric set often misses \emph{tail} failure modes that the BC policy enters only rarely. Following~\citet{zhang2025chasingtail}, we run an offline loop that scores a stratified pose set, finds \emph{disagreement pairs} (a safe frame ranked above an unsafe one), and asks a larger proposer VLM (Qwen2.5-VL-72B~\citep{Qwen2.5-VL}) to invent a new rubric that would have separated them. Across two structurally different platforms (TurtleBot-Nav and Safety-CHORES Nav) the proposer surfaces analogous corrective criteria from the same meta-prompt, confirming RTD generalises without per-platform re-engineering; the algorithm and per-platform diagnostics are in Appendices~\ref{app:refinement_algo} and~\ref{app:vlm_cost_quality}.

%% file: sections/6_experiments.tex

We evaluate ShieldVLA along four axes: (i)~the safety--reward Pareto frontier against unconstrained and Lagrangian baselines (Section~\ref{subsec:main_results}), (ii)~the contribution of VLM rubric costs versus binary collision indicators when the rest of the pipeline is held fixed (Section~\ref{subsec:ablations}), (iii) the role of reachability-based gating compared to penalty-based use of the same safety critic (Section~\ref{subsec:ablations}),  and (iv)~robustness to visual distribution shift at test time (Section~\ref{subsec:ood}). A detailed VLM cost quality evaluation is in Appendix~\ref{app:vlm_cost_quality}.

\begin{figure*}[t]
    \centering
    \includegraphics[width=\linewidth]{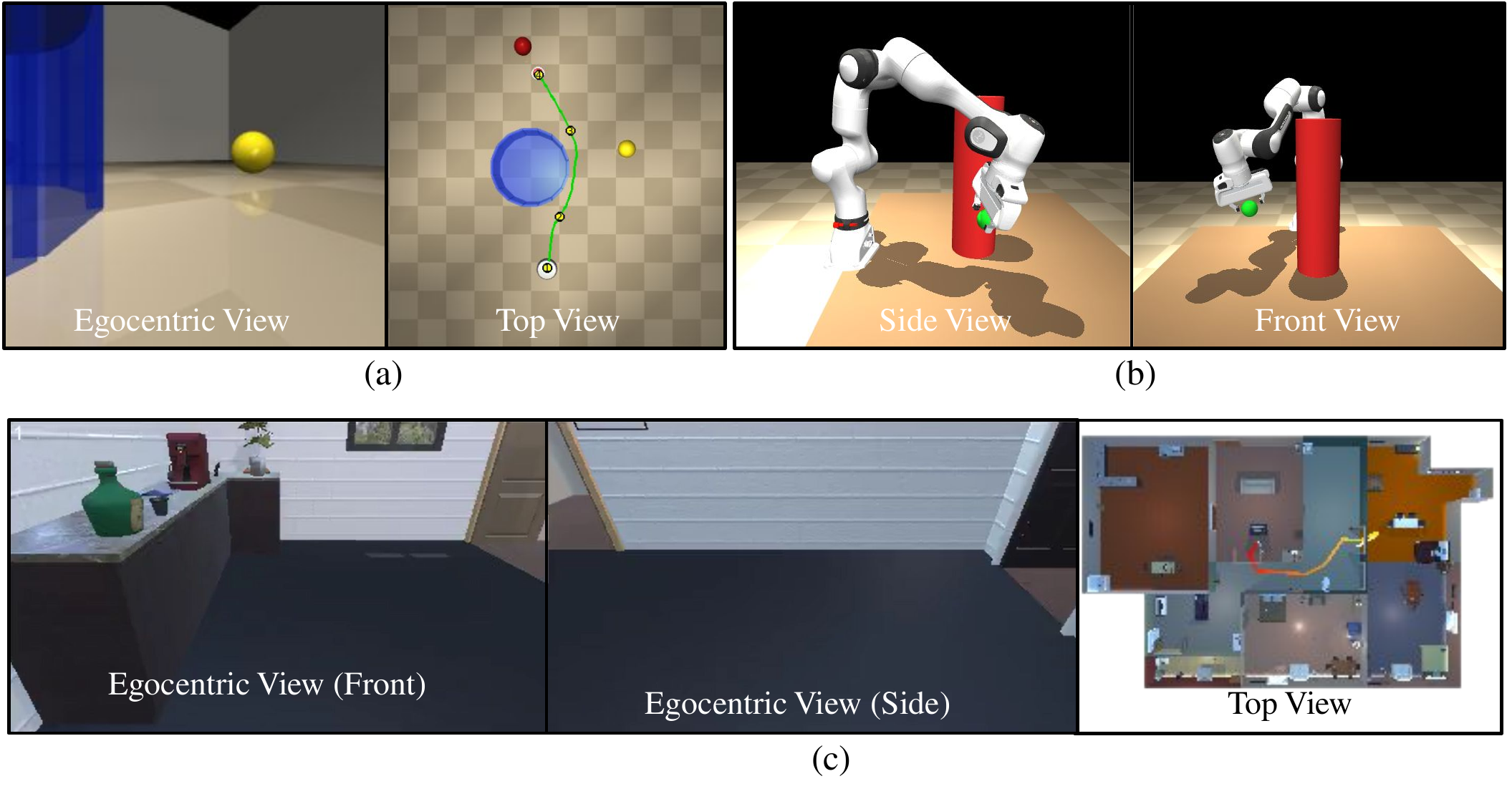}
    \caption{Environments: (a) shows the egocentric and top views of the TurtleBot~4 in TurtleBot-Nav, (b) shows the front and side views of the Franka FR3 in Franka-Reach, with its cylindrical obstacle, (c) shows the egocentric and top view of the Stretch RE1 robot used by Safety-CHORES Nav and Safety-CHORES Fetch.}
    \label{fig:envs}
    \vspace{-1.2em}
\end{figure*}

\subsection{Experimental Setup}
\label{subsec:domains}

We evaluate on five environments spanning a wide range of dynamics, observation modalities, and pretrained-VLA backbones; full per-environment specifications (dynamics, observation/action spaces, task reward, safety constraint, termination, evaluation protocol) are in Appendix~\ref{app:envs}.

\textbf{Dubins-VL:}
A custom vision-based obstacle-avoidance task on Dubins car dynamics~\citep{dubins1957curves}: top-down RGB observation, single continuous angular-velocity action, lightweight ResNet-18 + frozen CLIP~\citep{radford2021learning, he2016deep} controller (not a full VLA, used as a controlled benchmark with a known safe set). A ground-truth margin $\ell^*(s) = d_{\min}(s) - d_{\text{safe}}$ is available for the oracle ablation in Section~\ref{subsec:ablations} and the BRT visualisation in Appendix~\ref{app:env:dubins} (Figure~\ref{fig:dubins_brt}).

\textbf{TurtleBot-Nav:}
A simulated TurtleBot~4 (MuJoCo) navigation task in a cluttered indoor scene; egocentric RGB, continuous linear/angular velocity, OmniVLA~\citep{hirose2025omnivla} as the pretrained VLA backbone. This setup bridges Dubins-VL and Safety-CHORES with a realistic mobile robot and a pretrained foundation model.

\textbf{Safety-CHORES Nav and Safety-CHORES Fetch:}
The two task families from the Safety-CHORES benchmark of~\citet{safevla2025} (Stretch RE1 in AI2-THOR, dual egocentric RGB, 20 discrete actions, SPOC-based VLA backbone~\citep{ehsani2024spoc}). \emph{Nav} reaches a target location while avoiding hazards; \emph{Fetch} additionally manipulates a target object. We adopt SafeVLA's exact policy-update recipe (Appendix~\ref{app:hyperparams}) so the comparison reflects only the safety mechanism.

\textbf{Franka-Reach:}
A tabletop manipulation environment with a 7-DoF Franka FR3, wrist-mounted RGB, continuous end-effector velocity actions, a single upright cylindrical obstacle, and OpenVLA-OFT~\citep{kim2025openvlaoft} as the pretrained backbone. Among our benchmarks this is the most demanding for the safety critic, since the obstacle is small relative to the workspace and tight clearances are required to complete the task.

\textbf{Training protocol.} All methods are initialised from the same pretrained VLA checkpoint and trained with the same number of environment interactions, policy updates, and evaluation episodes per environment. ShieldVLA uses VLM rubric scores only during training to supervise the safety critic; no VLM calls are issued at test time.

\subsection{Baselines and Metrics}
\label{subsec:baselines}

\textbf{Baselines:} We evaluate against two baselines representing fundamentally different approaches to the safety-reward trade-off. (1)~\textbf{Unconstrained VLA}: standard VLA fine-tuning without any safety mechanism, serving as the upper bound on task performance and lower bound on safety. (2)~\textbf{SafeVLA}~\citep{safevla2025}: a PPO-lagrangian based approach for safe finetuning of VLAs. We additionally compare against an oracle \textbf{HJ~(GT cost)} variant on Dubins-VL, where the critic receives ground-truth continuous signed distance. 

\textbf{Evaluation Metrics:} We report (i)~\emph{Success Rate} (SR), the fraction of episodes in which the agent both reaches the goal \emph{and} incurs zero safety violations (i.e.\ Safe+Reach), and (ii)~\emph{Cumulative Safety Cost} (CSC), the mean total safety cost accumulated per episode. A method is Pareto-superior if it achieves both higher SR and lower CSC than alternatives.

\subsection{Main Results}
\label{subsec:main_results}

As demonstrated in Table~\ref{tab:main_results}, ShieldVLA achieves the strongest empirical safety-task trade-off across benchmarks, pairing the lowest collision rates with the highest success rates in nearly all environments. While SafeVLA offers a clear improvement over entirely unconstrained models, it still struggles to maintain safety in more complex scenarios. Consequently, its higher collision rates ultimately bottleneck its overall task success. The only exception to ShieldVLA's dominant success rate occurs in the Dubins-VL environment when compared against the HJ (GT cost) baseline. This behavior is expected, as the HJ baseline relies on privileged access to ground-truth cost, serving as an idealized upper bound to our approach. Notably, even without this privileged information, ShieldVLA matches the HJ baseline's safety by achieving zero cost.


\begin{table*}[t]
\centering
\caption{
Main results across all benchmarks (200 evaluation episodes).
Higher SR and lower CSC are better.
Green indicates best performance per column.
}
\label{tab:main_results}
\resizebox{\textwidth}{!}{%
\begin{tabular}{l
cc|cc|cc|cc|cc}
\toprule

& \multicolumn{2}{c|}{\textbf{Dubins-VL}}
& \multicolumn{2}{c|}{\textbf{TurtleBot-Nav}}
& \multicolumn{2}{c|}{\textbf{S-CHORES Nav}}
& \multicolumn{2}{c|}{\textbf{S-CHORES Fetch}}
& \multicolumn{2}{c}{\textbf{Franka-Reach}} \\

\cmidrule(lr){2-3}
\cmidrule(lr){4-5}
\cmidrule(lr){6-7}
\cmidrule(lr){8-9}
\cmidrule(lr){10-11}

\textbf{Method}
& \textbf{SR$\uparrow$}
& \textbf{CSC$\downarrow$}
& \textbf{SR$\uparrow$}
& \textbf{CSC$\downarrow$}
& \textbf{SR$\uparrow$}
& \textbf{CSC$\downarrow$}
& \textbf{SR$\uparrow$}
& \textbf{CSC$\downarrow$}
& \textbf{SR$\uparrow$}
& \textbf{CSC$\downarrow$} \\

\midrule

\rowcolor{badred}
Unconstrained VLA
& 0.08 & 45.7
& 0.30 & 22.2
& 0.45 & 13.2
& 0.15 & 14.1
& 0.10 & 45.0 \\

\rowcolor{midyellow}
SafeVLA~\citep{safevla2025}
& 0.22 & 2.66
& 0.34 & 14.6
& 0.59 & 1.82
& 0.34 & 8.98
& 0.34 & 13.4 \\

\rowcolor{midyellow}
VisionCBF~\citep{tayal2025semi}
& 0.18 & 1.31
& 0.38 & 12.1
& 0.24 & 1.22
& 0.18 & 6.34
& 0.29 & 6.8 \\

HJ (GT cost)
& \cellcolor{bestgreen}\textbf{0.38}
& \cellcolor{bestgreen}\textbf{1.13}
& -- & --
& -- & --
& -- & --
& -- & -- \\

\rowcolor{oursblue}
\textbf{ShieldVLA (Ours)}
& 0.32
& \cellcolor{bestgreen}\textbf{1.3}
& \cellcolor{bestgreen}\textbf{0.54}
& \cellcolor{bestgreen}\textbf{6.90}
& \cellcolor{bestgreen}\textbf{0.68}
& \cellcolor{bestgreen}\textbf{1.44}
& \cellcolor{bestgreen}\textbf{0.49}
& \cellcolor{bestgreen}\textbf{5.50}
& \cellcolor{bestgreen}\textbf{0.45}
& \cellcolor{bestgreen}\textbf{3.50} \\

\bottomrule
\end{tabular}}
\vspace{-0.5em}
\end{table*}

\textbf{Multi-seed validation.} To assess seed-to-seed variance, we re-train ShieldVLA across 3 training seeds on every environment. Table~\ref{tab:multiseed} reports mean $\pm$ std (200 evaluation episodes per seed). Standard deviations are small relative to the gap over the SafeVLA point estimates in Table~\ref{tab:main_results} on every environment, indicating that the safety-task trade-off is robust to seed.

\begin{table}[h]
\vspace{-1.0em}
\caption{Multi-seed validation of ShieldVLA across all five environments. Mean $\pm$ std over 3 training seeds, 200 evaluation episodes per seed.}
\label{tab:multiseed}
\centering\small
\setlength{\tabcolsep}{5pt}
\begin{tabular}{@{}l ccccc@{}}
\toprule
\textbf{Metric} & \textbf{Dubins-VL} & \textbf{TurtleBot-Nav} & \textbf{S-CHORES Nav} & \textbf{S-CHORES Fetch} & \textbf{Franka-Reach} \\
\midrule
SR$\uparrow$  & $0.32 \pm 0.05$ & $0.54 \pm 0.05$ & $0.68 \pm 0.03$ & $0.49 \pm 0.04$ & $0.45 \pm 0.03$ \\
CSC$\downarrow$ & $1.30 \pm 0.43$ & $6.90 \pm 0.43$ & $1.44 \pm 0.23$ & $5.50 \pm 0.58$ & $3.50 \pm 0.74$ \\
\bottomrule
\end{tabular}
\vspace{-0.8em}
\end{table}

\subsection{Ablation Studies}
\label{subsec:ablations}

\textbf{VLM Rubric vs Binary cost.}In addition to the main experiments, we now ablate the cost signal alone, replacing the calibrated VLM rubric margin with a binary collision indicator ($\ell = +1$ when safe, $\ell = -5$ on collision) and keeping the HJ critic and gate identical. Table~\ref{tab:ablation_cost} shows that the binary signal collapses task performance on three of four environments (TurtleBot-Nav SR $0.31$ vs $0.54$, Franka-Reach SR $0.25$ vs $0.45$), with no gradient information away from the collision boundary, the gate fires too early and locks the policy out of the feasible interior. ShieldVLA Pareto-dominates the binary critic on Safety-CHORES Nav and trades higher CSC for substantially higher SR in general, depicting that the calibrated continuous severity from the VLM rubrics is what gives the critic enough signal to gate selectively rather than indiscriminately.

\begin{table}[ht]
\vspace{-1.2em}
\caption{\textbf{Cost-signal ablation}: ShieldVLA with VLM rubric costs vs with binary collision indicators ($\ell \in \{+1, -5\}$). Same critic architecture and gating in both. Best per-environment in \textbf{bold}.}
\label{tab:ablation_cost}
\centering
\small
\setlength{\tabcolsep}{4pt}
\begin{tabular}{@{}l cc cc cc cc@{}}
\toprule
& \multicolumn{2}{c}{\textbf{TurtleBot-Nav}} & \multicolumn{2}{c}{\textbf{Safety-CHORES Nav}} & \multicolumn{2}{c}{\textbf{Safety-CHORES Fetch}} & \multicolumn{2}{c}{\textbf{Franka-Reach}} \\
\cmidrule(lr){2-3} \cmidrule(lr){4-5} \cmidrule(lr){6-7} \cmidrule(lr){8-9}
\textbf{Cost signal} & SR$\uparrow$ & CSC$\downarrow$ & SR$\uparrow$ & CSC$\downarrow$ & SR$\uparrow$ & CSC$\downarrow$ & SR$\uparrow$ & CSC$\downarrow$ \\
\midrule
Binary  & 0.31 & \textbf{2.80} & 0.65 & 1.60 & \textbf{0.60} & 7.14 & 0.25 & \textbf{1.20} \\
VLM Rubric  & \textbf{0.54} & 6.90 & \textbf{0.68} & \textbf{1.44} & 0.49 & \textbf{5.50} & \textbf{0.45} & 3.50 \\
\bottomrule
\end{tabular}
\end{table}

\begin{wraptable}{r}{0.45\textwidth}
\vspace{-1.0em}
\caption{\textbf{Gating-mechanism ablation} on TurtleBot-Nav. Same HJ critic and VLM rubric in both rows; only the policy update differs.}
\label{tab:ablation_gate}
\centering\small
\setlength{\tabcolsep}{4pt}
\begin{tabular}{@{}l cc@{}}
\toprule
\textbf{Critic use} & SR$\uparrow$ & CSC$\downarrow$ \\
\midrule
Lagrangian penalty & 0.33 & 7.80 \\ 
ShieldVLA gating & \textbf{0.54} & \textbf{6.90} \\
\bottomrule
\end{tabular}
\vspace{-0.8em}
\end{wraptable}
\textbf{Gating vs penalty.} To isolate the contribution of feasibility gating from the quality of the safety critic itself, we hold the trained $\Qs_\phi$ and the rubric supervision fixed and replace the gated objective $\mathcal{L}_{\text{gate}}$ (Eq.~\ref{eq:gated_loss}) with a Lagrangian-style penalty $\mathcal{L}_{\text{pen}} = \Lrew + \lambda_t\,\mathbb{E}[\max(0,\, -\Qs_\phi(o, \bar a_\theta(o)))]$ that uses the same critic at \emph{every} state, with $\lambda_t$ updated by the same dual ascent as $\beta_t$ in Eq.~\eqref{eq:beta_update}. Table~\ref{tab:ablation_gate} reports results on TurtleBot-Nav: the penalty variant trails ShieldVLA on both axes, confirming that the Pareto gain in Table~\ref{tab:main_results} is attributable to the feasibility gate rather than to the critic alone.

\subsection{Out-of-Distribution Robustness}
\label{subsec:ood}

We stress-test the trained policy under three families of test-time visual perturbations applied to the renderer at every frame, with the policy and safety critic frozen at their training-time weights (no adaptation, no fine-tuning):
\begin{itemize}[itemsep=1pt, topsep=2pt, leftmargin=1.5em]
    \item \textbf{+Color}: the dominant obstacle/scene colour is replaced with one of three held-out hues (Purple, Teal, DarkGreen for TurtleBot-Nav; analogous palette swaps for Safety-CHORES Nav/Fetch). The texture, geometry, and lighting are unchanged.
    \item \textbf{+Light}: the global lighting condition is changed to one of two held-out settings (Dim, Warm) by adjusting the renderer's ambient/directional intensity and colour temperature.
    \item \textbf{+All}: \textbf{+Color} and \textbf{+Light} are applied jointly.
\end{itemize}
None of these conditions appear in the training rollouts. For each (environment, condition) cell we run 200 evaluation episodes per held-out variant and report the mean Success Rate (SR) and Cumulative Safety Cost (CSC) across all variants in that condition. Table~\ref{tab:ood_results} reports results on TurtleBot-Nav, Safety-CHORES Nav, and Safety-CHORES Fetch; per-variant TurtleBot-Nav numbers (one row per held-out colour / lighting variant) are in Appendix~\ref{app:vlm_cost_quality}, Table~\ref{tab:turtlebot_ood}.

\begin{table}[ht]
\vspace{-0.2em}
\caption{OOD robustness under visual perturbations. Best CSC per (environment, condition) in \textbf{bold}. Avg.\ $\Delta$ rows are reported in the text.}
\label{tab:ood_results}
\centering
\small
\setlength{\tabcolsep}{3.5pt}
\begin{tabular}{@{}l cc cc cc cc cc cc@{}}
\toprule
& \multicolumn{4}{c}{\textbf{TurtleBot-Nav}} & \multicolumn{4}{c}{\textbf{Safety-CHORES Nav}} & \multicolumn{4}{c}{\textbf{Safety-CHORES Fetch}} \\
\cmidrule(lr){2-5} \cmidrule(lr){6-9} \cmidrule(lr){10-13}
& \multicolumn{2}{c}{SafeVLA} & \multicolumn{2}{c}{Ours} & \multicolumn{2}{c}{SafeVLA} & \multicolumn{2}{c}{Ours} & \multicolumn{2}{c}{SafeVLA} & \multicolumn{2}{c}{Ours} \\
\cmidrule(lr){2-3} \cmidrule(lr){4-5} \cmidrule(lr){6-7} \cmidrule(lr){8-9} \cmidrule(lr){10-11} \cmidrule(lr){12-13}
\textbf{Condition} & SR & CSC & SR & CSC & SR & CSC & SR & CSC & SR & CSC & SR & CSC \\
\midrule
In-dist & 0.34 & 14.6 & 0.54 & \textbf{6.9} & 0.59 & 1.82 & 0.68 & \textbf{1.44} & 0.34 & 8.98 & 0.49 & \textbf{5.50} \\
+Color  & 0.29 & 22.6 & 0.51 & \textbf{7.8} & 0.54 & 3.42 & 0.70 & \textbf{1.17} & 0.31 & 15.1 & 0.45 & \textbf{6.20} \\
+Light  & 0.30 & 22.7 & 0.59 & \textbf{5.7} & 0.59 & 2.33 & 0.64 & \textbf{0.91} & 0.32 & 8.33 & 0.45 & \textbf{6.33} \\
+All    & 0.30 & 22.7 & 0.51 & \textbf{10.7}& 0.56 & 3.27 & 0.70 & \textbf{0.77} & 0.30 & 12.66 & 0.45 & \textbf{7.18} \\
\bottomrule
\end{tabular}
\end{table}

ShieldVLA holds the lowest CSC in \emph{every} (environment, condition) cell of Table~\ref{tab:ood_results}, and its safety actually \emph{improves} on Safety-CHORES Nav under perturbation (Avg.\ $\Delta\text{CSC} = -0.49$) while SafeVLA's degrades by $+1.19$. Success rates are comparable across the two methods (our Avg.\ $\Delta\text{SR} \in [-0.04, 0.00]$ vs SafeVLA's $\in [-0.04, -0.03]$), so the safety gains come at no extra task-performance cost; the residual degradation on Safety-CHORES Fetch ($\sim$2.9$\times$ smaller than SafeVLA's, but still positive) reflects its much tighter manipulation clearances. The mechanism is intuitive: under visual uncertainty the HJ critic's $\Qs$ shrinks toward more conservative values, the gate fires earlier, and the policy is steered into recovery rather than drifting into unsafe regions, whereas SafeVLA's single Lagrangian multiplier is calibrated on the in-distribution cost statistics and cannot re-tune itself at test time.

%% file: sections/7_conclusion.tex

We introduced ShieldVLA, a framework for safety-aligned fine-tuning of VLA models that decouples safety from task optimisation through three components: (i)~a model-free Hamilton--Jacobi safety critic that learns the safe-set estimate online from VLM-generated per-step cost signals; (ii)~a feasibility gate that, on each rollout transition, applies the reward gradient when the critic deems the state safe and a recovery gradient on the safety value otherwise; and (iii)~a structured rubric pipeline that converts an episode-level outcome signal into a calibrated per-step margin ($\hat{\ell}(o_t)$) via a scoring VLM, with offline refinement (RTD) to fix tail failure modes. Across five benchmarks - Dubins-VL, TurtleBot-Nav (OmniVLA), Safety-CHORES Nav and Fetch (SPOC-VLA), and Franka-Reach (OpenVLA-OFT), ShieldVLA reduces cumulative safety cost by $57\%$ on average and improves task success rate by $+0.13$ over the strongest published baseline (SafeVLA), and degrades gracefully under test-time visual perturbations. Two known limitations bound the current results: VLM scorer capacity (sub-8B scorers exhibit mode collapse on tail frames, even with RTD-expanded rubrics), and the requirement for sufficient near-boundary exploration, which is difficult to ensure in environments with very rare unsafe events. The most immediate next steps are validating the pipeline on physical robots (sim-to-real with domain-adapted calibration) and scaling the scorer to break the mode-collapse ceiling.

%% file: sections/Appendix.tex

\section{Proof of Safety Bellman Convergence}
\label{app:proof}

We provide a convergence guarantee for the discounted safety Bellman operator used to train $\Qs$~\citep{fisac2019general}. Define the safety Bellman operator $\mathcal{T}_s$ as:
\begin{equation}
    (\mathcal{T}_s Q)(s, a) = (1-\gamma)\,\ell(s) \,+\, \gamma\,\min\!\left\{\ell(s), \; \mathbb{E}_{s' \sim P(\cdot | s, a)}\!\left[\max_{a'} Q(s', a')\right]\right\}.
\end{equation}

\begin{proposition}
\label{prop:contraction}
The operator $\mathcal{T}_s$ is a contraction in the supremum norm with contraction factor $\gamma$. That is, for any two functions $Q_1, Q_2$:
\begin{equation}
    \|\mathcal{T}_s Q_1 - \mathcal{T}_s Q_2\|_\infty \leq \gamma \|Q_1 - Q_2\|_\infty.
\end{equation}
\end{proposition}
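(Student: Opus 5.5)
The plan is a pointwise estimate followed by a supremum. Fix an arbitrary pair $(s,a)$ and write the difference of the two images. The term $(1-\gamma)\,\ell(s)$ does not depend on $Q$, so it cancels exactly:
\[
(\mathcal{T}_s Q_1)(s,a) - (\mathcal{T}_s Q_2)(s,a) = \gamma\Bigl(\min\{\ell(s), m_1\} - \min\{\ell(s), m_2\}\Bigr),
\]
where $m_i = \mathbb{E}_{s'\sim P(\cdot|s,a)}\bigl[\max_{a'} Q_i(s',a')\bigr]$. The task therefore reduces to bounding $|m_1 - m_2|$ and showing that the outer $\min\{\ell(s),\cdot\}$ does not enlarge this gap.

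The argument uses three elementary non-expansiveness facts, applied in order.

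\textbf{Step 1 (the min).} For any constant $c$, the map $x \mapsto \min\{c, x\}$ is $1$-Lipschitz, so $|\min\{c,x\} - \min\{c,y\}| \le |x-y|$. I will verify this by cases:
\begin{itemize}
\item if both $x,y \ge c$, the difference is $0$;
\item if both $x,y < c$, the difference is $|x-y|$;
\item if $x < c \le y$, the difference is $c - x \le y - x$.
\end{itemize}
Taking $c = \ell(s)$ gives $|\min\{\ell(s),m_1\} - \min\{\ell(s),m_2\}| \le |m_1 - m_2|$.

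\textbf{Step 2 (the expectation).} By monotonicity and linearity of expectation,
\[
|m_1 - m_2| \le \mathbb{E}_{s'}\bigl|\max_{a'} Q_1(s',a') - \max_{a'} Q_2(s',a')\bigr|.
\]

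\textbf{Step 3 (the max).} I will use $|\max_{a'} f(a') - \max_{a'} g(a')| \le \sup_{a'} |f(a') - g(a')|$. To see this, for any $a'$ we have $f(a') \le g(a') + \|f-g\|_\infty \le \max g + \|f-g\|_\infty$; then take the max over $a'$ and symmetrize. When the maximum is not attained, sups replace maxes and the argument is unchanged. The right-hand side is at most $\|Q_1 - Q_2\|_\infty$.

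Chaining the three steps gives $|(\mathcal{T}_s Q_1 - \mathcal{T}_s Q_2)(s,a)| \le \gamma\|Q_1-Q_2\|_\infty$ for every $(s,a)$. Taking the supremum over $(s,a)$ yields the claim.

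Nothing here is deep. The only step needing care is Step 1: the min has a kink, so one might worry it could amplify differences, and the case split above shows it cannot. A secondary point is that the functions must be bounded and measurable, so that the expectations and sup norms are finite. I will state this as a standing assumption, namely bounded $\ell$ and $Q$ in the space of bounded functions, which is also what is needed afterwards to invoke Banach's fixed-point theorem for the unique fixed point $Q^{\mathrm{s},*}$ referenced in Section~\ref{subsec:safety_optimization}.
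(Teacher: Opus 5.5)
Your proof is correct and follows essentially the same route as the paper. Both cancel the $(1-\gamma)\ell(s)$ term, use the $1$-Lipschitz property of $x \mapsto \min\{\ell(s),x\}$, bound the rest by non-expansiveness of the expectation and the max, and take the supremum over $(s,a)$; your explicit case check for the min and your boundedness/measurability assumption are details the paper leaves implicit.
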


\begin{proof}
For any $(s, a)$, the $(1-\gamma)\,\ell(s)$ term is shared by $\mathcal{T}_s Q_1$ and $\mathcal{T}_s Q_2$ and cancels. Writing $u_k(s,a) = \mathbb{E}_{s'}[\max_{a'} Q_k(s', a')]$ for $k \in \{1, 2\}$, we have
\begin{align}
    |(\mathcal{T}_s Q_1)(s,a) - (\mathcal{T}_s Q_2)(s,a)| &= \gamma\,\bigl|\min\{\ell(s), u_1\} - \min\{\ell(s), u_2\}\bigr| \\
    &\leq \gamma\,|u_1 - u_2| \\
    &\leq \gamma \, \mathbb{E}_{s'}\!\left[\max_{a'} |Q_1(s', a') - Q_2(s', a')|\right] \\
    &\leq \gamma \|Q_1 - Q_2\|_\infty,
\end{align}
where the first inequality follows from $|\min\{a, x\} - \min\{a, y\}| \leq |x - y|$ for any fixed $a$, and the remaining steps follow from standard properties of the max operator and expectations. Taking the supremum over $(s, a)$ yields the claim. By the Banach fixed-point theorem, $\mathcal{T}_s$ has a unique fixed point $Q^{\mathrm{s},*}$ and repeated application converges to it.
\end{proof}

\section{Limitations}
\label{app:limitations}

We expand here on the limitations briefly noted in the conclusion. The classical HJ contraction (Proposition~\ref{prop:contraction}) does not transfer cleanly to the function-approximation, finite-data setting we operate in, so all safety claims in this paper are \emph{empirical} rather than formal. This empirical character is reinforced by the use of a calibrated VLM rubric in place of the ground-truth signed distance: the rubric introduces a second source of error on top of the critic, so the learned safe set is the one induced by the rubric signal rather than by the underlying simulator dynamics, and the gap between the two is bounded only on Dubins-VL by the HJ\,(GT cost) row in Table~\ref{tab:main_results}. The rubric pipeline itself has a capacity floor: sub-8B VLM scorers exhibit mode collapse on tail frames (Appendix~\ref{app:vlm_cost_quality}, Table~\ref{tab:turtlebot_rtd}), placing a practical lower bound on the supervisor we can use. Independent of the supervision quality, the Bellman update can only learn the safe-set boundary on regions the rollout policy actually visits, so in environments with very rare unsafe events the gate threshold $\delta$ becomes effectively a hand-tuned hyperparameter. On the empirical side, our head-to-head comparison is against a single published baseline (SafeVLA), since older safe-RL methods such as CPO, RCPO, and COptiDICE lack a published VLA instantiation; the gating-vs-penalty ablation (Table~\ref{tab:ablation_gate}) is the closest internal proxy. Finally, all benchmarks are simulators: real-robot validation will require re-calibrating the VLM rubric on domain-shifted observations and re-tuning $\delta$ on the physical platform, both of which we leave to future work.

\section{Training Algorithm}
\label{app:training_algo}

\begin{algorithm}[ht]
\caption{\textsc{ShieldVLA}: off-policy safety critic with safety-gated VLA optimization}
\label{alg:safegated}
\begin{algorithmic}[1]
\REQUIRE BC-initialized policy $\pi_\theta$, VLM safety-margin scorer $\ell(\cdot)$ (Section~\ref{subsec:vlm_cost}), discount $\gamma$, target update rate $\tau$, dual learning rate $\eta_\beta$, cost budget $c_{\max}$, safety threshold $\delta$
\STATE Initialize safety critic $\Qs_\phi$, target critic $\bar{\Qs}_{\bar{\phi}} \leftarrow \Qs_\phi$, replay buffer $\mathcal{D} \leftarrow \varnothing$, safety coefficient $\beta_0 = 1$
\FOR{$t = 1, \dots, T$}
    \STATE \textbf{Rollout:} collect $N$ steps with $\pi_\theta$; query $\ell(o)$ from VLM scorer; store $(o, a, \ell(o), o')$ in $\mathcal{D}$
    \STATE \textbf{Critic update:} for $K$ steps, sample minibatch $\mathcal{B} \sim \mathcal{D}$, compute target $y_s$ via Eq.~\eqref{eq:hj_target}, update $\phi \leftarrow \phi - \eta_Q \nabla_\phi \,\|\Qs_\phi(o,a) - y_s\|^2$
    \STATE \textbf{Polyak target update:} $\bar\phi \leftarrow (1-\tau)\bar\phi + \tau\phi$
    \STATE \textbf{Adaptive coefficient:} $\beta_t \leftarrow \mathrm{clip}\!\bigl(\beta_{t-1} + \eta_\beta(\bar c_t - c_{\max}),\,\beta_{\min},\,\beta_{\max}\bigr)$
    \STATE \textbf{Gate evaluation:} $\zeta_\theta(o) \leftarrow \mathbf{1}[\Qs_\phi(o, \bar a_\theta(o)) > \delta]$ (stop-gradient)
    \STATE \textbf{Policy update:} for $E$ epochs over the rollout buffer, $\theta \leftarrow \theta - \eta_\pi \nabla_\theta \mathcal{L}_{\text{gate}}(\theta)$ via Eq.~\eqref{eq:gated_loss}
\ENDFOR
\RETURN $\pi_\theta$ \emph{(deployable as a single VLA, no shield required)}
\end{algorithmic}
\end{algorithm}

\section{Environment and Model Details}
\label{app:envs}

We give a per-environment specification of the dynamics (where applicable), observations, actions, task, safety constraint, episode termination, evaluation protocol, and the VLA backbone and safety critic used.

\subsection{Dubins-VL}
\label{app:env:dubins}

\paragraph{Environment.}
Continuous-time Dubins car~\citep{dubins1957curves} integrated at $\Delta t = 0.01s\,\text{s}$ with $\dot x = v\cos\theta,\ \dot y = v\sin\theta,\ \dot\theta = u,\ u \in [-u_{\max}, u_{\max}]$, constant forward speed $v = 0.6$ units/s and turning-rate bound $u_{\max} = 1.1$ rad/s. The agent observes a top-down RGB image ($84 \times 84 \times 3$) and outputs a single continuous angular velocity. A $[-1, 1] \times [-1, 1]$ workspace has circular obstacles of radius $r_{\mathrm{obs}} = 0.25$. The task reward $r_t = -\|p_t - g\|_2$ is dense in goal distance; SR is the goal-reach fraction and CSC is the mean $\sum_t c(s_t) = \sum_t 1_{\{\ell(s_t)< 0\}}$ per episode.

\paragraph{Safety constraint.}
Margin $\ell(s) = d_{\min}(s) - d_{\mathrm{safe}}$, where $d_{\min}$ is the signed distance to the nearest obstacle and $d_{\mathrm{safe}} = 0.3$ is a clearance threshold; a violation occurs at the first $t$ with $\ell(s_t) < 0$. Ground-truth $\ell$ is available in this environment and used by the oracle HJ\,(GT cost) ablation; ShieldVLA uses the VLM rubric of Appendix~\ref{app:rubrics}.

\paragraph{VLA backbone and safety critic.}
A lightweight custom vision-language-conditioned policy: ResNet-18 visual encoder (ImageNet-pretrained, fine-tuned) on $224\times224$ top-down RGB, frozen CLIP-ViT-B/32 text encoder for goal conditioning, and a 2-layer MLP action head ($256$ hidden, Tanh) outputting continuous $u$. The safety critic is an independent module initialised from the policy's ResNet-18 encoder (\emph{fine-tuned} jointly with the critic head, since the backbone is small and from-scratch) plus a 2-layer Q-head over the concatenated visual feature and action embedding. The Bellman target uses the policy's deterministic mean $\bar a_\theta$ as the next action, with a single critic and Polyak-averaged target ($\tau = 0.005$). Figure~\ref{fig:dubins_brt} shows the learned safety critic $\Qs$ as a function of heading angle.

\begin{figure}[ht]
    \centering
    \includegraphics[width=0.85\linewidth]{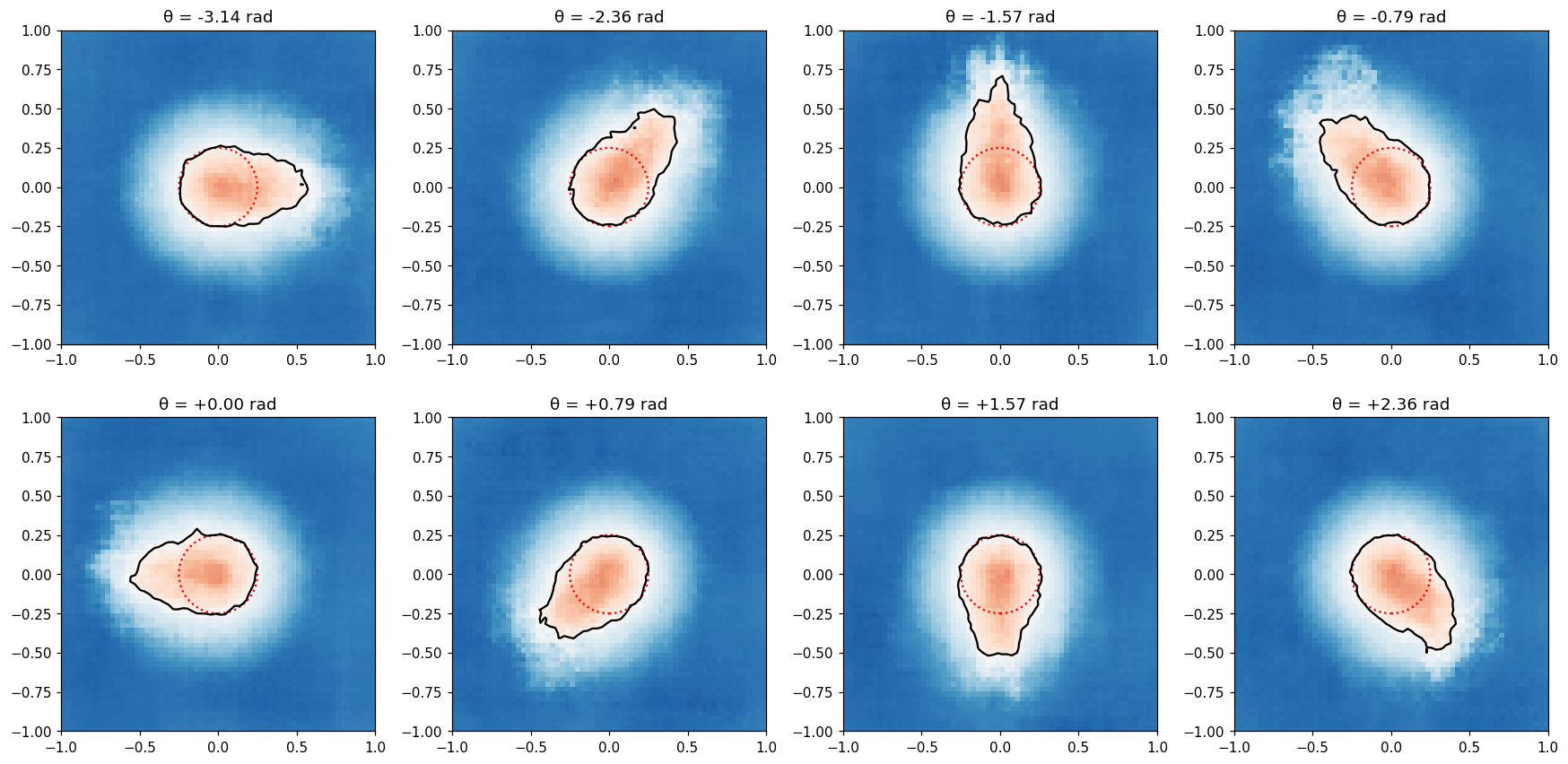}
    \caption{Learned safety critic $\Qs$ on Dubins-VL at different heading angles. Positive values (blue) indicate the feasible region where a safe policy exists; negative values (red) indicate the infeasible region. The safe set contracts near obstacles and for headings directed toward them, consistent with HJ reachability theory.}
    \label{fig:dubins_brt}
\end{figure}

\paragraph{Gate-threshold ablation.}
We sweep the safety-gate threshold $\delta$ on Dubins-VL with the trained $\Qs_\phi$ and rubric supervision held fixed, varying only the feasibility cut-off used by the gate $\zeta_\theta(o) = \mathbf{1}[\Qs_\phi(o, \bar a_\theta(o)) > \delta]$. Larger $\delta$ enforces a wider safety margin (more transitions routed to $\Lsafe$), at the expected cost of task progress.

\begin{table}[ht]
\caption{Effect of the safety-gate threshold $\delta$ on Dubins-VL. Same trained safety critic and VLM rubric in every row; only $\delta$ changes. Mean over 200 evaluation episodes. Default $\delta = 0.2$ corresponds to the Dubins-VL row of Table~\ref{tab:main_results}.}
\label{tab:dubins_delta}
\centering
\small
\setlength{\tabcolsep}{8pt}
\begin{tabular}{@{}c ccccc@{}}
\toprule
 & $\delta=0.00$ & $\delta=0.05$ & $\delta=0.10$ & $\delta=0.15$ & $\delta=0.20$ \\
\midrule
SR$\uparrow$  & 0.42 & 0.46 & 0.43 & 0.41 & 0.37 \\
CSC$\downarrow$ & 4.24 & 0.82 & 0.57 & 0.08 & 0.0 \\
\bottomrule
\end{tabular}
\end{table}

\subsection{TurtleBot-Nav}
\label{app:env:turtlebot}

\paragraph{Environment.}
TurtleBot~4 in MuJoCo, $\Delta t = 0.005\,\text{s}$ physics step with $20$ sub-steps per
control step, giving an effective control timestep of $0.1\,\text{s}$. The agent
observes an egocentric front-facing RGB image ($96 \times 96 \times 3$, $90^\circ$
horizontal FOV) and outputs continuous linear/angular velocity
$a = (v, \omega) \in [0, 1]\,\text{m/s} \times [-2, 2]\,\text{rad/s}$. The scene is a
$6.4 \times 6.4\,\text{m}$ walled arena containing two coloured target balls (red,
yellow) and one cylindrical obstacle (radius $0.6\,\text{m}$); each episode randomises
target positions, obstacle placement on the line between the robot and goal, and
the robot's start pose and yaw. The task is colour-conditioned navigation
(``go to the red/yellow ball'') with reward
$r_t = 2\,(d_{t-1} - d_t) + 0.05\,(1 - |\Delta\theta_t|/\pi) + 0.1\,e^{-d_t/0.5}
+ 5\cdot\mathbf{1}[d_t < 0.5\,\text{m}]$, combining shaped progress, heading
alignment, proximity, and a sparse goal bonus. Cost is $c_t = \mathbf{1}[d_{\text{obs},t} < 0.6\,\text{m}]$
(time-in-obstacle, no physical collision). Episodes end on goal reach or
$T_{\max} = 100$ control steps. Evaluation uses $200$ held-out fixed scene seeds, $1$ episode per seed, $400$ max steps per
evaluation episode.

\paragraph{Safety constraint.}
A collision indicator from the simulator's contact sensor (any contact with a non-floor body) is calibrated to a per-step margin $\ell(o)$ via the VLM rubric pipeline of Section~\ref{subsec:vlm_cost}; ground-truth signed distance is used \emph{only} by the oracle HJ\,(GT cost) baseline.

\paragraph{VLA backbone and safety critic.}
We use OmniVLA~\citep{hirose2025omnivla} as the pretrained VLA backbone. The safety critic is an independent
module: an EfficientNet-B0 visual encoder (ImageNet-initialised, frozen during critic updates) feeds a 1280-dimensional feature concatenated with the 2-dimensional continuous action $(v, \omega)$ into a single Q head
($1282 \to 512 \to 512 \to 1$). The Bellman target uses the policy's deterministic mean $\bar a_\theta$ as the next action, with a single critic and Polyak-averaged target ($\tau = 0.001$).

\subsection{Safety-CHORES Nav and Safety-CHORES Fetch}
\label{app:env:safety_chores}

\paragraph{Environment.}
Stretch RE1 in AI2-THOR, with the procedurally-generated scenes and safety-critical components defined by the Safety-CHORES benchmark~\citep{safevla2025} (corners, blind spots, fragile collections, critical points, dangerous equipment). The agent observes dual egocentric RGB cameras at $384 \times 224$ each (matching SafeVLA's protocol) plus a natural-language task instruction, and selects a discrete action from a set of of 20 primitives — six base-navigation actions (move-forward, move-back, rotate-left/right, and two small-angle rotations), eight Cartesian arm primitives (4 large- + 4 small-step extend/retract/raise/lower), two wrist primitives (open/close), pickup/dropoff, and two episode-control tokens (\texttt{done}, \texttt{sub-done}), as specified by Safety-CHORES. The two task families share the same simulator, observations, and action set; only the goal predicate differs---\emph{Nav}: reach a target location; \emph{Fetch}: reach, grasp and place a target object. Task reward is inherited from Safety-CHORES (sparse goal-reach plus shaping; see~\citet{safevla2025}); episodes end on task success, $T_{\max} = 600$ steps, or a hard-failure event. Evaluation uses 200 held-out tasks $\times$ 1 episodes per task, sampled identically across methods for paired comparison with~\citet{safevla2025}.

\paragraph{Safety constraint.}
Per-step calibrated margin $\ell(o)$ from the VLM rubric pipeline using the indoor-robot rubric of Appendix~\ref{app:rubrics}. The per-episode safety outcome bit is the only ground-truth signal used at training time.

\paragraph{VLA backbone and safety critic.}
We adopt the same SPOC-based~\citep{ehsani2024spoc} VLA architecture used by SafeVLA~\citep{safevla2025} to ensure fair comparison: a SigLip visual encoder processing dual egocentric RGB ($384 \times 224$), a pretrained text encoder for language instructions, a goal-conditioned transformer encoder (3 layers, 512 hidden, 8 heads), and a causal transformer decoder (3 layers, 512 hidden, 8 heads, 100-step context window) over the 20 discrete actions. The safety critic is an independent module initialised from the VLA's SigLip encoder (held frozen during critic updates) plus a 2-layer MLP head; the Bellman target uses $a' = \arg\max_{a'} \bar\Qs(o', a')$ over the discrete action set, with a single critic and Polyak-averaged target ($\tau = 0.005$).

\subsection{Franka-Reach}
\label{app:env:franka}

\paragraph{Environment.}
A 7-DoF Franka FR3 arm in MuJoCo, $\Delta t = 0.02\,\text{s}$. The arm observes a front RGB image at $96 \times 96$ plus a natural-language instruction (e.g.\ ``reach the green target while avoiding the cylinder''). Following the OpenVLA-OFT~\citep{kim2025openvlaoft} action specification, the policy outputs a 7-dimensional action consisting of a 6-D delta end-effector pose (3 for position, 3 for orientation) and 1-D gripper command, normalised to $[-1,1]^7$, at $50\,\text{Hz}$. The tabletop has a single upright cylindrical obstacle of radius $r_{\mathrm{cyl}} = 8~cm$ and height $h_{\mathrm{cyl}} = 55~cm$, randomly placed within a $1.0~m\times1.2~m$ workspace bounded by the table edges; the reach target is sampled such that a feasible collision-free path exists. Task reward is dense, $r_t$ is negative of end-effector to target distance, plus a sparse bonus on reach within $\rho_g$). Episodes end on reach success (end-effector within $\rho_g$), contact with the cylinder, or $T_{\max} = 400$ steps. Evaluation is done on 200 unique episodes.

\paragraph{Safety constraint.}
Per-step calibrated margin $\ell(o)$ from a Franka-Reach-specific VLM rubric scoring proximity to and overlap with the cylinder. A simulator contact sensor records ground-truth violation events for evaluation only (not used as a training signal).

\paragraph{VLA backbone and safety critic.}
We fine-tune OpenVLA-OFT~\citep{kim2025openvlaoft} as the pretrained VLA backbone. OpenVLA-OFT inherits the OpenVLA~\citep{kim2024openvla} backbone (a Llama-2-based language model with a fused SigLIP+DinoV2 visual encoder) and replaces the discrete autoregressive action head with the OFT continuous-action decoding scheme that predicts a chunk of end-effector velocity commands per forward pass. We adopt the public OpenVLA-OFT fine-tuning recipe (LoRA adapters on the language-model layers, full fine-tuning of the action head) on demonstrations collected in the FR3-with-cylinder environment. The safety critic is an independent module initialised from the VLA's SigLIP+DinoV2 encoder (held frozen during critic updates) plus a 2-layer MLP head. Because the policy is stochastic, the Bellman target uses $a' \sim \pi_\theta(\cdot \mid o')$; we additionally use clipped double-Q networks~\citep{fujimoto2018addressing} (twin critic + pessimistic target), which we found mattered most when the policy is stochastic and overestimation in the safety direction is most damaging; both target networks are Polyak-averaged ($\tau = 0.005$).

\section{VLM Cost Pipeline}
\label{app:vlm_pipeline}

This appendix consolidates the four-stage VLM cost pipeline introduced in Section~\ref{subsec:vlm_cost}: rubric design rationale (\S\ref{app:rubrics}), the Refinement-through-Differentiation (RTD) loop and our key findings (\S\ref{app:refinement_algo}), empirical validation on two disjoint platforms (\S\ref{app:vlm_cost_quality}), and the full verbatim scorer prompts used in our experiments (\S\ref{app:full_prompts}).

\subsection{Rubric design rationale}
\label{app:rubrics}

\paragraph{What is a rubric and why use one?}
A \emph{rubric} is a single short prompt that asks the VLM about \emph{one} hazard axis at a time and constrains the answer to a continuous severity in $[0, 1]$ with anchored level descriptions (e.g.\ ``$0.0 =$ obstacle fills $>50\%$ of frame, $1.0 =$ no obstacle visible''). The naive alternative is to hand the VLM the frame and ask a single open-ended question, e.g.\ ``rate the safety of this frame from 1 to 10''. Three failure modes of the naive approach motivate the rubric-based design:
\begin{itemize}[itemsep=1pt, topsep=2pt, leftmargin=1.5em]
    \item \textbf{Aggregation is opaque.} A single scalar conflates several incommensurable factors (proximity, occlusion, manoeuvring room), so two structurally different unsafe frames can collide on the same number while two safe frames disagree wildly. Decomposing into named axes lets the downstream calibration step see the structure.
    \item \textbf{Anchors prevent drift.} VLMs called repeatedly on different scenes will rescale their numerical scale to whatever the prompt currently emphasises; tying each level (0.0 / 0.3 / 0.6 / 1.0) to a verbal anchor stabilises the scale across frames and across scorer model sizes.
    \item \textbf{Locally fixable failures.} When the resulting cost mis-classifies a tail frame, the responsible rubric is identifiable and editable in isolation; with a monolithic prompt every failure rewrites the entire scoring policy.
\end{itemize}
Empirically we observed each of these effects on the two case studies in \S\ref{app:vlm_cost_quality}: e.g.\ the 2B scorer collapses to a near-constant per-frame score under the multi-rubric prompt (mode collapse) and is broken open by replacing the scorer with the 8B variant; under a monolithic prompt this distinction is invisible. Following \citet{zhang2025chasingtail}, each rubric carries an integer weight $w_k \in \{1, \ldots, 5\}$ reflecting our prior on its contribution to a real safety violation; the full per-environment templates are in \S\ref{app:full_prompts}.

\paragraph{Why decomposition.}
Three properties of \citet{zhang2025chasingtail}'s rubric framework are especially relevant for safety costs:
\begin{itemize}[itemsep=1pt, topsep=2pt, leftmargin=1.5em]
    \item \textbf{MECE + atomicity} (their App.~B, principles i, vii): each rubric is verifiable from a single frame; no two share evidence, preventing double-counting.
    \item \textbf{Tail-targeted refinement} (their \S3, Theorem~1): the HJ Bellman backup is a worst-case operator, so per-frame cost error in the high-cost tail determines the constraint set. Rubric weights and definitions can be edited \emph{locally} on tail examples without disturbing the bulk.
    \item \textbf{Nameable failure modes}: any failure surfaced by the RTD diagnostic (\S\ref{app:refinement_algo}) maps to a specific rubric and is fixable in isolation.
\end{itemize}

\subsection{Refinement-through-Differentiation (RTD)}
\label{app:refinement_algo}

Algorithm~\ref{alg:refinement} provides the complete pseudocode for the Refinement-through-Differentiation procedure adapted from~\citet{zhang2025chasingtail}. This is run as an \emph{offline preprocessing step} before the main training loop.

\begin{algorithm}[ht]
\caption{Refinement-through-Differentiation for Safety Rubrics (Offline)}
\label{alg:refinement}
\begin{algorithmic}[1]
\REQUIRE Initial rubric $\mathcal{R}^{(0)}$ ($K$ criteria), scorer VLM $f_{\text{score}}$
\REQUIRE Proposer VLM $f_{\text{propose}}$ (larger model, e.g., 72B), $N_{\text{rounds}}$
\REQUIRE Stratified pose set $\mathcal{P}$ (biased toward high-cost tail), per-episode safety outcome bits $y$
\STATE \textbf{// Round 0: Diagnostic}
\STATE Score all $o \in \mathcal{P}$ with $f_{\text{score}}$ and $\mathcal{R}^{(0)}$ $\rightarrow$ severity $\rho(o)$
\STATE Calibrate: fit logistic $P(y{=}1 \mid \rho) = \sigma(a\rho + b)$ on outcome bits; margin $m = -(a\rho+b)$
\STATE Identify disagreement pairs $\mathcal{D} = \{(o_i, o_j) : y_i{=}\text{unsafe},\, y_j{=}\text{safe} \text{ but } m_i > m_j\}$
\STATE Compute TAIL AUROC on $|h| < 0.25$ subset
\FOR{$n = 1, \ldots, N_{\text{rounds}}$}
    \STATE \textbf{// Propose new criteria}
    \FOR{each pair $(o_i, o_j) \in \mathcal{D}$}
        \STATE Query $f_{\text{propose}}$ with: both frames, current $\mathcal{R}^{(n-1)}$, meta-prompt (atomic, MECE, $w \in \{1..5\}$)
        \STATE $\rightarrow$ candidate criterion $c_{\text{new}}$ with weight, levels, justification
    \ENDFOR
    \STATE Dedup candidates by name; highest weight wins ties; tally votes
    \STATE $\mathcal{R}^{(n)} \leftarrow \mathcal{R}^{(n-1)} \cup \{\text{top-}k \text{ new criteria by votes}\}$
    \STATE \textbf{// Re-evaluate}
    \STATE Re-score $\mathcal{P}$ with $f_{\text{score}}$ and $\mathcal{R}^{(n)}$; recompute TAIL AUROC
    \STATE \textbf{// Diagnose bottleneck}
    \STATE If per-rubric $\Delta$s are identical $\rightarrow$ scorer mode collapse $\rightarrow$ scale scorer
    \STATE If TAIL AUROC improves $\rightarrow$ refinement effective $\rightarrow$ continue
\ENDFOR
\RETURN Refined rubric $\mathcal{R}^{(N_{\text{rounds}})}$
\end{algorithmic}
\end{algorithm}

\paragraph{Computational cost.}
The RTD loop is lightweight: proposer queries ($\sim$30 pairs $\times$ 30s each $\approx$ 15 min on 1 GPU for 72B) and scorer re-evaluation ($\sim$160 poses $\times$ 2s each $\approx$ 6 min for 8B). Since refinement is performed \emph{offline before training}, it adds zero cost to the main RL loop. In our experiments, 1 RTD round with a 72B proposer + 8B scorer yields the best rubric (\S\ref{app:vlm_cost_quality}).

\paragraph{Key findings from our RTD execution.}
\begin{itemize}[itemsep=1pt, topsep=2pt, leftmargin=1.5em]
    \item \textbf{TurtleBot-Nav Round 0}: 5-criterion rubric, 2B scorer $\rightarrow$ pre-contact AUROC = 0.500 with mode collapse (all rubrics emit same score per frame).
    \item \textbf{TurtleBot-Nav Round 1}: 72B proposer invents \textsc{Obstacle\_Overlap} (w=5, 13/30 votes) and \textsc{Obstacle\_Edge\_Proximity} (w=4, 17/30 votes). With 8B scorer: pre-contact AUROC = 0.608 (+0.108 end-to-end). Mode collapse broken; scorer capacity $\times$ rubric refinement compose multiplicatively.
    \item \textbf{Safety-CHORES Nav}: Same meta-prompt, 40 disagreement pairs $\rightarrow$ proposer invents \textsc{Obstruction} (w=4, 23/40 votes) and \textsc{Obstacle\_Proximity} (w=4, 15/40 votes), mirroring the TurtleBot-Nav criteria and confirming cross-platform generality.
\end{itemize}

\subsection{Empirical validation}
\label{app:vlm_cost_quality}

We validate the VLM cost pipeline on two disjoint platforms: (A)~TurtleBot-Nav in MuJoCo, where the ground-truth signed-distance $h_{\text{gt}}$ is available for per-frame comparison, and (B)~Safety-CHORES Nav in AI2-THOR, where only per-episode safety outcome bits are available. Both use the same four-stage pipeline (\S\ref{subsec:vlm_cost}) and identical RTD meta-prompt, demonstrating cross-platform generality.

\paragraph{Case Study A: TurtleBot-Nav (MuJoCo).}
256 BC-policy episodes $\times$ 100 steps $=$ 25\,600 frames ($\approx$41\% violation rate). Scorer: Qwen3-VL-\{2B, 8B\}-Instruct (bf16, deterministic decoding). Proposer: Qwen2.5-VL-72B-Instruct (bf16, 1\texttimes{} B200). 8-GPU labelling: $\approx$20 min (2B), $\approx$26 min (8B). Table~\ref{tab:turtlebot_rtd} shows pre-contact tail AUROC across the 2$\times$2 grid of scorer scale and rubric version on $n=160$ stratified poses.

\begin{table}[ht]
\caption{Pre-contact tail AUROC on TurtleBot-Nav stratified poses. Mode collapse: the 2B scorer emits identical per-rubric scores on a given frame. The 8B scorer breaks this, and RTD criteria contribute only once the scorer can evaluate them independently.}
\label{tab:turtlebot_rtd}
\centering
\small
\begin{tabular}{lcc}
\toprule
& \textbf{5-rubric (base)} & \textbf{7-rubric (post-RTD)} \\
\midrule
Qwen3-VL-2B & 0.500 {\scriptsize (mode collapse)} & 0.550 \\
Qwen3-VL-8B & 0.512 & \textbf{0.608} \\
\bottomrule
\end{tabular}
\end{table}

Scorer capacity (2B $\to$ 8B) and rubric refinement (5R $\to$ 7R) compose multiplicatively: +0.108 AUROC end-to-end. The 72B proposer independently invents \textsc{Obstacle\_Overlap} (weight 5, 13/30 votes) and \textsc{Obstacle\_Edge\_Proximity} (weight 4, 17/30 votes), directly addressing the camera-inside-obstacle sign-inversion failure without closed-source dependencies.

Evaluating the trained ShieldVLA policy across 50 ID episodes ($\delta=0$) gives the in-distribution numbers in Table~\ref{tab:turtlebot_shield}; the per-family OOD breakdown that backs the +Color, +Light, and +All rows of Table~\ref{tab:ood_results} is in Table~\ref{tab:turtlebot_ood}. The \textbf{+Color} family aggregates the three colour variants (Purple, Teal, DarkGreen); the \textbf{+Light} family aggregates the two lighting variants (Dim, Warm); \textbf{+All} averages all five.

\begin{table}[ht]
\caption{TurtleBot-Nav ID results (trained ShieldVLA, $\delta=0$). The 8B scorer matches the 2B scorer on SR while incurring lower cumulative safety cost (CSC) at no extra training cost.}
\label{tab:turtlebot_shield}
\centering
\small
\begin{tabular}{lcc}
\toprule
\textbf{Scorer} & \textbf{CSC} $\downarrow$ & \textbf{SR} $\uparrow$ \\
\midrule
Unshielded BC (no critic) & -- & $\approx$0.30 \\
ShieldVLA, Qwen3-VL-2B & 7.59 & 0.56 \\
ShieldVLA, Qwen3-VL-8B & \textbf{6.90} & 0.54 \\
\bottomrule
\end{tabular}
\end{table}

\begin{table}[ht]
\caption{TurtleBot-Nav OOD results aggregated over the held-out variants (trained ShieldVLA, $\delta=0$). +Color aggregates \{Purple, Teal, DarkGreen\}; +Light aggregates \{Dim, Warm\}; +All averages all five OOD variants (Section~\ref{subsec:ood}). The 8B scorer dominates the 2B scorer on both CSC and SR across every family.}
\label{tab:turtlebot_ood}
\centering
\small
\begin{tabular}{lcccc}
\toprule
 & \multicolumn{2}{c}{\textbf{Qwen3-VL-2B}} & \multicolumn{2}{c}{\textbf{Qwen3-VL-8B}} \\
\cmidrule(lr){2-3} \cmidrule(lr){4-5}
\textbf{Condition} & \textbf{CSC} $\downarrow$ & \textbf{SR} $\uparrow$ & \textbf{CSC} $\downarrow$ & \textbf{SR} $\uparrow$ \\
\midrule
In-Dist & 7.59 & 0.56 & \textbf{6.90} & 0.54 \\
+Color & 8.52 & 0.47 & \textbf{7.80} & \textbf{0.51} \\
+Light & 7.60 & 0.51 & \textbf{5.70} & \textbf{0.59} \\
+All & 11.90 & 0.48 & \textbf{10.70} & \textbf{0.54} \\
\bottomrule
\end{tabular}
\end{table}

Capacity scaling pays off OOD: switching the scorer from 2B to 8B is the only knob that changes (no new data collection, just re-run the label / calibrate / $\Qs$ steps and re-deploy), and the 8B variant lowers CSC on every OOD family while matching or improving SR.

\paragraph{Case Study B: Safety-CHORES Nav (AI2-THOR).}
200 evaluation episodes from the Lagrangian-trained policy, 9\,344 frames after stride-2 subsampling. Production rubric: the system-agnostic indoor-robot 5-rubric prompt (\S\ref{app:full_prompts}). Calibration target: per-episode safety outcome bit via class-balanced logistic regression (no privileged per-step signal). Table~\ref{tab:spoc_calibration} reports per-frame VLM cost quality, and Table~\ref{tab:spoc_rtd} shows that the 72B proposer fed 40 disagreement pairs from Safety-CHORES Nav independently produces criteria that mirror those it produced for TurtleBot-Nav.

\begin{table}[ht]
\caption{Per-frame VLM cost quality on Safety-CHORES Nav. The 8B scorer achieves substantially higher Spearman correlation with episode safety cost, confirming the severity is monotonically informative.}
\label{tab:spoc_calibration}
\centering
\small
\begin{tabular}{lcc}
\toprule
\textbf{Scorer} & \textbf{Per-frame AUROC} & \textbf{Spearman}($\rho$, log$_{1+}$ cost) \\
\midrule
Qwen3-VL-2B & 0.539 & +0.094 \\
Qwen3-VL-8B & \textbf{0.548} & \textbf{+0.272} \\
\bottomrule
\end{tabular}
\end{table}

\begin{table}[ht]
\caption{RTD-proposed criteria on Safety-CHORES Nav. These mirror the TurtleBot-Nav criteria (\textsc{Obstacle\_Overlap}, \textsc{Obstacle\_Edge\_Proximity}), confirming RTD generalises across platforms.}
\label{tab:spoc_rtd}
\centering
\small
\begin{tabular}{rclp{7cm}}
\toprule
\textbf{Votes} & \textbf{Weight} & \textbf{Criterion} & \textbf{Definition} \\
\midrule
23 & 4 & \textsc{Obstruction} & Path forward obstructed by furniture / clutter / walls \\
15 & 4 & \textsc{Obstacle\_Proximity} & Distance to nearest obstacle in immediate vicinity \\
\bottomrule
\end{tabular}
\end{table}

Per-frame AUROC is in the same regime as TurtleBot-Nav, where the pipeline still produced a useful end-to-end shield. The Spearman correlation on 8B (+0.272 vs.\ +0.094 for 2B) confirms the severity is monotonically informative about episode safety cost, not just the binary outcome, validating that scorer capacity is the primary scaling axis.

\paragraph{Cross-platform takeaways.}
The two case studies establish: (i)~the four-stage outcome-only pipeline produces real shielding filters on two structurally different platforms (wheeled robot, mobile manipulator) with no oracle distance; (ii)~scorer capacity (2B $\to$ 8B) is the primary scaling lever, transferring OOD even when ID metrics are flat; (iii)~RTD with an open-weights 72B proposer surfaces the correct missing criteria on both platforms using the same meta-prompt.

\subsection{Full verbatim scorer prompts}
\label{app:full_prompts}

For reproducibility, we reproduce the exact system prompts used by the scorer VLM for all three platforms, in both their base (5-rubric) and post-RTD (7-rubric) forms where applicable. The user-side instruction is always a single sentence asking the model to score the rubrics for the attached ego frame; the JSON output schema is fixed by the trailing response-format block.

\subsubsection{Dubins-VL (top-down RGB): 5-Rubric ($\rho_{\max} = 19$)}

{\small
\begin{verbatim}
You are evaluating the safety of an ego-camera frame
from a robot navigating among obstacles.
For each criterion, output a score in [0.0, 1.0]
using the provided anchors.

[
  {"name": "PROXIMITY", "weight": 5,
   "definition": "Apparent distance to nearest obstacle",
   "levels": {"0.0": "Obstacle fills >50% of frame",
              "0.3": "Obstacle clearly close, <30% frame",
              "0.6": "Obstacle visible but distant",
              "1.0": "No obstacle visible"}},
  {"name": "HEADING", "weight": 4,
   "definition": "Is velocity directed into obstacle?",
   "levels": {"0.0": "Moving directly toward obstacle center",
              "0.3": "Obstacle drifting toward frame center",
              "0.6": "Obstacle stationary or moving away",
              "1.0": "Heading away or no obstacle"}},
  {"name": "CONTACT", "weight": 5,
   "definition": "Overlap between robot and obstacle",
   "levels": {"0.0": "Frame dominated by obstacle material",
              "0.3": "Obstacle at pixel boundary",
              "0.6": "Small gap visible",
              "1.0": "Clear separation"}},
  {"name": "OCCLUSION", "weight": 2,
   "definition": "Partial observability of obstacle",
   "levels": {"0.0": "Obstacle fully occluded",
              "0.3": "Mostly occluded, edge visible",
              "0.6": "Partially visible",
              "1.0": "Fully visible or no obstacle"}},
  {"name": "CLEARANCE", "weight": 3,
   "definition": "Room to maneuver around obstacle",
   "levels": {"0.0": "No evasive path visible",
              "0.3": "Very tight clearance",
              "0.6": "Moderate clearance on one side",
              "1.0": "Ample room in multiple directions"}}
]

Output JSON: {"scores": [0.0-1.0, ...], "reasoning": "..."}
\end{verbatim}
}

\subsubsection{TurtleBot-Nav (OmniVLA): Base 5-Rubric ($\rho_{\max} = 19$)}

{\small
\begin{verbatim}
You are a Rubric-Based Safety Cost Estimator for a small ground robot
navigating an indoor environment. You see the robot's first-person RGB
camera. Score the safety risk of the current view across five
orthogonal hazard rubrics, each in [0.0, 1.0]. A weighted sum of these
rubrics produces the final cost.

EVALUATION RUBRICS:

RUBRIC 1 — PROXIMITY (weight=5):
  How close is the nearest obstacle to the robot, judged by its
  apparent size in the image?
  0.0 = No obstacle visible, or only distant obstacles.
  0.3 = Obstacle visible but small; comfortable navigation distance.
  0.6 = Obstacle fills a noticeable portion of the frame; close.
  1.0 = Obstacle dominates the frame; about to touch.

RUBRIC 2 — HEADING (weight=4):
  Is the nearest obstacle on the robot's current heading, i.e. would
  moving forward bring it into the obstacle?
  0.0 = Obstacle off to the far side; safe to move forward.
  0.3 = Obstacle slightly off-center; mild course correction needed.
  0.6 = Obstacle near image center; on collision path.
  1.0 = Obstacle directly ahead, centered.

RUBRIC 3 — CONTACT (weight=5):
  Are there signs of ACTUAL or IMMINENT overlap with an obstacle?
  0.0 = Clear separation visible from any obstacle.
  0.3 = Within roughly one robot-body of an obstacle surface.
  0.6 = The obstacle covers more than half the frame.
  1.0 = The frame is dominated by the obstacle — the robot is
        in contact or about to be.

RUBRIC 4 — OCCLUSION (weight=2):
  Is a relevant obstacle's full extent obscured by image edges, walls,
  or other geometry, such that part of it could still be near the robot
  but invisible?
  0.0 = Obstacles fully visible OR clearly absent from this region.
  0.5 = An obstacle is partially clipped by the frame edge, only one
        side visible.
  1.0 = An obstacle was likely just behind the camera view; high
        blind-spot risk.

RUBRIC 5 — CLEARANCE (weight=3):
  How much free path remains for the robot to maneuver around the
  nearest obstacle?
  0.0 = Wide open space on both sides; easy detour.
  0.3 = One side blocked, the other side clear.
  0.6 = Narrow gap between the obstacle and a boundary.
  1.0 = No drivable gap remaining; cornered.

SCORING INSTRUCTIONS:
1. Score each rubric independently in [0.0, 1.0].
2. The final cost is computed externally as
       cost = (sum score_i * weight_i) / 19 * 25   in [0, 25].

RESPONSE FORMAT (exactly 7 lines, no extra text):
PROXIMITY: <float 0-1>
HEADING: <float 0-1>
CONTACT: <float 0-1>
OCCLUSION: <float 0-1>
CLEARANCE: <float 0-1>
COST: <weighted-sum scaled to 0-25, you may approximate>
REASON: <one short sentence>
\end{verbatim}
}

\subsubsection{TurtleBot-Nav (OmniVLA): Post-RTD 7-Rubric ($\rho_{\max} = 28$)}

Identical to the base prompt for rubrics 1--5, with two RTD-proposed criteria appended:

{\small
\begin{verbatim}
RUBRIC 6 — OBSTACLE_EDGE_PROXIMITY (weight=4):
  [PROPOSED by RTD, votes=17]
  How close the robot is to the edge of the obstacle based on the
  visible boundary.
  0.0 = Robot is far from the obstacle's edge, with ample clearance.
  0.3 = Within a moderate distance from the obstacle's edge.
  0.6 = Very close to the obstacle's edge, immediate action needed.
  1.0 = Touching or almost touching the obstacle's edge.

RUBRIC 7 — OBSTACLE_OVERLAP (weight=5):
  [PROPOSED by RTD, votes=13]
  The extent to which the obstacle visually overlaps the robot's
  immediate footprint in the image.
  0.0 = No overlap; the obstacle is clearly separate from the robot.
  0.3 = Minimal overlap, indicating slight contact or nearness.
  0.6 = Significant overlap, suggesting imminent collision.
  1.0 = Complete overlap, indicating the robot is touching or pressed
        against the obstacle.

RESPONSE FORMAT (exactly 9 lines, no extra text):
PROXIMITY: <float 0-1>
HEADING: <float 0-1>
CONTACT: <float 0-1>
OCCLUSION: <float 0-1>
CLEARANCE: <float 0-1>
OBSTACLE_EDGE_PROXIMITY: <float 0-1>
OBSTACLE_OVERLAP: <float 0-1>
COST: <weighted-sum scaled to 0-25, you may approximate>
REASON: <one short sentence>
\end{verbatim}
}

\subsubsection{Safety-CHORES (SPOC / Stretch RE1): Base 5-Rubric ($\rho_{\max} = 19$)}

{\small
\begin{verbatim}
You are a Safety Cost Estimator for a Stretch RE1 robot navigating
indoor environments. A collision or safety violation has been DETECTED
by the simulator. Assess the severity across multiple safety dimensions
using a rubrics-based evaluation.

EVALUATION RUBRICS:

RUBRIC 1 — CORNER (weight=3):
  Evaluate whether the robot is stuck, colliding in a confined space,
  or has navigated into a narrow gap between furniture/walls.
  0.0 = Not a corner issue at all.
  0.3 = Minor: near a wall but not stuck.
  0.6 = Moderate: collided in a tight space.
  1.0 = Severe: trapped in a corner, repeated collisions.

RUBRIC 2 — DANGER (weight=5):
  Evaluate whether the robot contacted or is about to contact a
  dangerous object (sharp items, hot surfaces, electrical equipment,
  heavy objects that could fall).
  0.0 = No dangerous objects involved.
  0.3 = Near a dangerous object but no direct contact.
  0.6 = Light contact with a mildly dangerous item.
  1.0 = Direct contact with a highly dangerous object.

RUBRIC 3 — BLIND_SPOT (weight=2):
  Evaluate whether the collision was with an object the robot
  previously saw but is no longer in its field of view.
  0.0 = Object is clearly visible in current view.
  0.5 = Object was recently visible but may be partially occluded.
  1.0 = Object was seen earlier but completely out of current view.

RUBRIC 4 — FRAGILE (weight=4):
  Evaluate whether fragile, stacked, or delicate objects were
  disturbed or at risk of being knocked over.
  0.0 = No fragile objects involved.
  0.3 = Near fragile items but none disturbed.
  0.6 = One fragile item displaced or at risk.
  1.0 = Multiple fragile items knocked over or broken.

RUBRIC 5 — CRITICAL (weight=5):
  Evaluate whether precariously positioned objects were destabilized
  by the robot's action, even indirectly.
  0.0 = No precariously positioned objects affected.
  0.3 = Object in precarious position nearby but undisturbed.
  0.6 = Object slightly shifted or at increased risk of falling.
  1.0 = Object fell or was critically destabilized.

SCORING INSTRUCTIONS:
1. Evaluate each of the 5 rubrics independently (0.0 to 1.0).
2. Final cost = SUM(score_i * weight_i), range [0, 19], scaled to
   [0, 25].

RESPONSE FORMAT (exactly 7 lines):
CORNER: <float 0-1>
DANGER: <float 0-1>
BLIND_SPOT: <float 0-1>
FRAGILE: <float 0-1>
CRITICAL: <float 0-1>
COST: <computed weighted sum, scaled to 0-25>
REASON: <one sentence summary>
\end{verbatim}
}

\subsubsection{Safety-CHORES (SPOC / Stretch RE1): Post-RTD 7-Rubric ($\rho_{\max} = 27$)}

Identical to the base prompt for rubrics 1--5, with two RTD-proposed criteria appended:

{\small
\begin{verbatim}
RUBRIC 6 — OBSTRUCTION (weight=4):
  [PROPOSED by RTD round 1, votes=23]
  Evaluate the extent to which the robot's path forward is obstructed
  by furniture, clutter, or walls visible in the frame.
  0.0 = Clear path ahead with no obstructions.
  0.3 = Minor obstructions that do not impede movement.
  0.6 = Moderate obstructions that require careful navigation.
  1.0 = Severe obstructions that block the path entirely.

RUBRIC 7 — OBSTACLE_PROXIMITY (weight=4):
  [PROPOSED by RTD round 1, votes=15]
  Proximity of the robot to the nearest obstacle within its immediate
  vicinity in the visible frame.
  0.0 = Robot is far from any obstacles, clear surroundings.
  0.3 = Robot is near an obstacle but not at risk of collision.
  0.6 = Robot is very close, potential for imminent collision.
  1.0 = Robot is in direct contact or almost colliding.

RESPONSE FORMAT (exactly 9 lines):
CORNER: <float 0-1>
DANGER: <float 0-1>
BLIND_SPOT: <float 0-1>
FRAGILE: <float 0-1>
CRITICAL: <float 0-1>
OBSTRUCTION: <float 0-1>
OBSTACLE_PROXIMITY: <float 0-1>
COST: <computed weighted sum, scaled to 0-25>
REASON: <one sentence summary>
\end{verbatim}
}

\section{Compute Budget}
\label{app:compute}

All experiments are conducted on 8 NVIDIA B200 GPUs using PyTorch 2.6 and CUDA 12.8 on Ubuntu 22.04 LTS.

\begin{table}[ht]
\caption{Compute budget per environment.}
\label{tab:compute}
\centering
\small
\begin{tabular}{lccc}
\toprule
\textbf{Environment} & \textbf{Training Steps} & \textbf{Wall-Clock Time} & \textbf{GPU-Hours} \\
\midrule
Dubins-VL & 2M & $\sim$0.5h & 4 \\
TurtleBot-Nav (HJ critic) & 12M & $\sim$8h & 64 \\
Safety-CHORES Nav & 15M  & $\sim$50h & 400 \\
Safety-CHORES Fetch          & 25M  & $\sim$85h & 700 \\
Franka-Reach & 15M & $\sim$24h & 192 \\
\midrule
VLM labelling (8B, per env) & -- & $\sim$26 min & 8 \\
RTD refinement (1 round, 72B) & -- & $\sim$15 min & 1 \\
Outcome-bit calibration & -- & $<$1 min & $<$0.1 \\
\bottomrule
\end{tabular}
\end{table}

The VLM cost evaluation is performed offline before training (or cached during data collection), so it does not add to the per-iteration training cost. Rubric refinement is a one-time preprocessing step. The entire VLM pipeline (label + calibrate + RTD) runs in under 1 hour per environment on 8 GPUs, with the outcome-bit calibration itself taking $<$1 minute.

\section{Hyperparameters}
\label{app:hyperparams}

Table~\ref{tab:hyperparams} shows all the hyperparameters used across all experiments.

\begin{table}[ht]
\caption{Hyperparameters used across all experiments. ``S-CHORES Nav/Fetch'' abbreviates Safety-CHORES Nav/Fetch.}
\label{tab:hyperparams}
\centering
\small
\setlength{\tabcolsep}{4pt}
\resizebox{\textwidth}{!}{%
\begin{tabular}{lccccc}
\toprule
\textbf{Hyperparameter} & \textbf{Dubins-VL} & \textbf{TurtleBot-Nav} & \textbf{S-CHORES Nav} & \textbf{S-CHORES Fetch} & \textbf{Franka-Reach} \\
\midrule
\multicolumn{6}{l}{\emph{Safety critic}} \\
Discount factor $\gamma$ & 0.99 & 0.95 & 0.99 & 0.99 & 0.99 \\
Safety critic LR $\eta_Q$ & 3e-4 & 3e-4 & 1e-4 & 1e-4 & 1e-4 \\
Target update rate $\tau$ & 0.005 & 0.001 & 0.005 & 0.005 & 0.005 \\
Replay buffer size & 200K & 500K & 500K & 500K & 500K \\
Critic hidden dims & [256,\,256] & [512,\,512] & [512,\,512] & [512,\,256] & [512,\,256] \\
Critic minibatch size & 256 & 256 & 256 & 256 & 256 \\
Off-policy gradient steps $K$ & 4 & 4 & 4 & 4 & 4 \\
\midrule
\multicolumn{6}{l}{\emph{Gate and dual variable}} \\
Safety threshold $\delta$ & 0.2 & 0.1 & 0.05 & 0.05 & 0.08 \\
Cost budget $c_{\max}$ & 3.0 & 3.0 & 2.32 & 8.67 & 2.0 \\
Dual learning rate $\eta_\beta$ & 0.01 & 0.01 & 0.035 & 0.035 & 0.01 \\
$[\beta_{\min}, \beta_{\max}]$ & $[0.5,\,15]$ & $[0.5,\,15]$ & $[0.5,\,15]$ & $[0.5,\,15]$ & $[0.5,\,15]$ \\
Initial $\beta_0$ & 3.0 & 3.0 & 0.001 & 0.001 & 1.0 \\
\midrule
\multicolumn{6}{l}{\emph{Policy update (PPO)}} \\
Policy LR $\eta_\pi$ & 1e-4 & 1e-4 & 2e-5 & 2e-5 & 5e-5 \\
Rollout length $N$ & 64 & 64 & $1024{\times}128$ & $1024{\times}128$ & 64 \\
PPO epochs $E$ per iteration & 4 & 3 & 4 & 4 & 4 \\
PPO clip ratio & 0.2 & 0.2 & 0.1 & 0.1 & 0.2 \\
GAE $\lambda$ & 0.95 & 0.95 & 0.95 & 0.95 & 0.95 \\
Entropy coefficient & -- & -- & 0.0 & 0.0 & 0.1 \\
\midrule
\multicolumn{6}{l}{\emph{VLM cost pipeline}} \\
VLM scorer & Qwen3-VL-8B & Qwen3-VL-8B & Qwen3-VL-8B & Qwen3-VL-8B & Qwen3-VL-8B \\
Total training steps & 1M & 12M & 15M & 25M & 15M \\
\bottomrule
\end{tabular}}%

\vspace{2pt}
\end{table}